\pgfplotsset{compat=1.18}
\newtheorem{theorem}{Theorem}[section]
\newtheorem{proposition}[theorem]{Proposition}
\newtheorem{lemma}[theorem]{Lemma}
\newtheorem{corollary}[theorem]{Corollary}
\newcommand{\reals}{\mathbb{R}}
\newcommand{\sign}{\mathrm{sign}}
\newcommand{\relu}[1]{\left[ #1 \right]_+}
\newcommand{\bx}{\mathbf{x}}
\newcommand{\bw}{\mathbf{w}}
\newcommand{\bzero}{\mathbf{0}}
\newcommand{\Ocal}{\mathcal{O}}
\newcommand{\Ncal}{\mathcal{N}}
\newcommand{\Ucal}{\mathcal{U}}
\newcommand{\norm}[1]{\left\|#1\right\|}
\newcommand{\inner}[1]{\left\langle#1\right\rangle}
\newcommand{\p}[1]{\left(#1\right)}
\newcommand{\abs}[1]{\left|#1\right|}
\newcommand{\ceil}[1]{\left\lceil#1\right\rceil}
\newcommand{\floor}[1]{\left\lfloor#1\right\rfloor}
\newcommand{\poly}{\mathrm{poly}}
\newcommand{\Max}{\operatorname{Max}}
\newcommand{\subsecref}[1]{Subsection~\ref{#1}}
\renewcommand{\eqref}[1]{Eq.~(\ref{#1})}
\newcommand{\lemref}[1]{Lemma~\ref{#1}}
\newcommand{\corollaryref}[1]{Corollary~\ref{#1}}
\newcommand{\propref}[1]{Proposition~\ref{#1}}
\newcommand{\appref}[1]{Appendix~\ref{#1}}
\newcommand{\itemref}[1]{Item~\ref{#1}}
\title{A Depth Hierarchy for Computing the Maximum\\in ReLU Networks via Extremal Graph Theory}
\author[1]{Itay Safran}
\affil[1]{Ben-Gurion University}
\date{}
\tikzset{
	neuron/.style={circle, draw=black, fill=black!5, very thick, minimum size=15mm},
	linearactivation/.pic={
		\draw[thick,->] (-0.6, 0) -- (0.6, 0);
		\draw[thick,->] (0, -0.6) -- (0, 0.6);
		\draw[ultra thick,blue,-] (-0.45, -0.45) -- (0.45, 0.45);
	},
	reluactivation/.pic={
		\draw[thick,->] (-0.6, 0) -- (0.6, 0);
		\draw[thick,->] (0, -0.6) -- (0, 0.6);
		\draw[ultra thick,blue,-] (-0.45, 0) -- (0, 0);
		\draw[ultra thick,blue,-] (0, 0) -- (0.45, 0.45);
	},
	creluactivation/.pic={
		\draw[thick,->] (-0.6, 0) -- (0.6, 0);
		\draw[thick,->] (0, -0.6) -- (0, 0.6);
		\draw[ultra thick,blue,-] (-0.6, 0) -- (0, 0);
		\draw[ultra thick,blue,-] (0, 0) -- (0.31, 0.31);
		\draw[ultra thick,blue,-] (0.29, 0.3) -- (0.5, 0.3);
	},
	complexnode/.pic={
		\draw (0,0) ellipse (1 and 1.5)
		(0,2.5) circle (1)
		(2.5,0) circle (1.5);},
	%declare function={erf(\x)=1/(1+exp(-5*\x));},
	declare function={erf(\x)=(1-1/(1+0.278393*\x+0.230389*\x*\x+0.000972*\x*\x*\x*+0.078108*\x*\x*\x*\x)^4);},
	%\draw[thick,->] (-0.5, 0) -- (2, 0) node[right] {$\theta_1$};
	%\draw[thick,->] (0, -0.5) -- (0, 2) node[above] {$\theta_2$};
	erfactivation/.pic={
		\draw[thick,->] (-0.6, 0) -- (0.6, 0);
		\draw[thick,->] (0, -0.6) -- (0, 0.6);
		\draw[ultra thick, domain=0:0.5, smooth, variable=\x, blue] plot ({\x}, {erf(7*\x)/4});
		\draw[ultra thick, domain=-0.5:0, smooth, variable=\x, blue] plot ({\x}, {-erf(7*-\x)/4});},
	%Threshold function pic
	thresholdactivation/.pic={
		\draw[thick,->] (-0.6, 0) -- (0.6, 0);
		\draw[thick,->] (0, -0.6) -- (0, 0.6);
		\draw[ultra thick,blue,-] (-0.5, 0) -- (0, 0);
		\draw[ultra thick,blue,-] (0, 0.3) -- (0.5, 0.3);
	},
    highlightedrelu1/.pic={
        \draw[thick,->] (-0.6, 0) -- (0.6, 0);
        \draw[thick,->] (0, -0.6) -- (0, 0.6);
        % Base ReLU
        \draw[ultra thick,blue,-] (-0.45, 0) -- (0, 0);
        \draw[ultra thick,blue,-] (0, 0) -- (0.45, 0.45);
        % Highlighted segment
        % We use a simple conditional check for the y-coordinate based on the x-input
        %\pgfmathsetmacro{\starty}{#1 > 0 ? #1 : 0}
        %\pgfmathsetmacro{\endy}{#2 > 0 ? #2 : 0}
        \draw[ultra thick,red,-] (-0.6, 0) -- (-0.1, 0);
    },
    highlightedrelu2/.pic={
        \draw[thick,->] (-0.6, 0) -- (0.6, 0);
        \draw[thick,->] (0, -0.6) -- (0, 0.6);
        % Base ReLU
        \draw[ultra thick,blue,-] (-0.45, 0) -- (0, 0);
        \draw[ultra thick,blue,-] (0, 0) -- (0.45, 0.45);
        % Highlighted segment
        % We use a simple conditional check for the y-coordinate based on the x-input
        %\pgfmathsetmacro{\starty}{#1 > 0 ? #1 : 0}
        %\pgfmathsetmacro{\endy}{#2 > 0 ? #2 : 0}
        \draw[ultra thick,red,-] (0.08, 0.08) -- (0.45, 0.45);
    },
    highlightedrelu3/.pic={
        \draw[thick,->] (-0.6, 0) -- (0.6, 0);
        \draw[thick,->] (0, -0.6) -- (0, 0.6);
        % Base ReLU
        \draw[ultra thick,blue,-] (-0.45, 0) -- (0, 0);
        \draw[ultra thick,blue,-] (0, 0) -- (0.45, 0.45);
        % Highlighted segment
        % We use a simple conditional check for the y-coordinate based on the x-input
        %\pgfmathsetmacro{\starty}{#1 > 0 ? #1 : 0}
        %\pgfmathsetmacro{\endy}{#2 > 0 ? #2 : 0}
        \draw[ultra thick,red,-] (-0.6, 0) -- (-0.08, 0);
    },
    highlightedrelu4/.pic={
        \draw[thick,->] (-0.6, 0) -- (0.6, 0);
        \draw[thick,->] (0, -0.6) -- (0, 0.6);
        % Base ReLU
        \draw[ultra thick,blue,-] (-0.45, 0) -- (0, 0);
        \draw[ultra thick,blue,-] (0, 0) -- (0.45, 0.45);
        % Highlighted segment
        % We use a simple conditional check for the y-coordinate based on the x-input
        %\pgfmathsetmacro{\starty}{#1 > 0 ? #1 : 0}
        %\pgfmathsetmacro{\endy}{#2 > 0 ? #2 : 0}
        \draw[ultra thick,red,-] (0.2, 0.2) -- (0.45, 0.45);
    },
}
\begin{document}

\maketitle

\begin{abstract}
    We consider the problem of exact computation of the maximum function over $d$ real inputs using ReLU neural networks. We prove a depth hierarchy, wherein width $\Omega\big(d^{1+\frac{1}{2^{k-2}-1}}\big)$ is necessary to represent the maximum for any depth $3\le k\le \log_2(\log_2(d))$. This is the first unconditional super-linear lower bound for this fundamental operator at depths $k\ge3$, and it holds even if the depth scales with $d$. Our proof technique is based on a combinatorial argument and associates the non-differentiable ridges of the maximum with cliques in a graph induced by the first hidden layer of the computing network, utilizing Turán's theorem from extremal graph theory to show that a sufficiently narrow network cannot capture the non-linearities of the maximum. This suggests that despite its simple nature, the maximum function possesses an inherent complexity that stems from the geometric structure of its non-differentiable hyperplanes, and provides a novel approach for proving lower bounds for deep neural networks.
\end{abstract}

    \section{Introduction}

    The role depth plays in the expressive power of neural networks and its benefits over width have been studied extensively in recent years \citep{eldan2016power,telgarsky2016benefits,daniely2017depth,yarotsky2017error,liang2017deep,safran2017depth,safran2019depth,venturi2021depth,hsu2021approximation,safran2022optimization,safran2024many,safran2024depth}. Despite our growing understanding of the settings in which depth is beneficial, establishing approximation lower bounds remains an extremely challenging problem, especially for deep neural networks. While machine learning applications are ultimately concerned with approximating target functions with respect to a data distribution in the $L_2$ sense, recent work has shifted toward simpler, and perhaps cleaner, problem settings where exact computation is considered instead \citep{hertrich2021towards,haase2023lower,bakaev2025depth,averkov2025expressiveness,grillo2025depth}. While it is well-known that sufficiently wide depth-2 networks can approximate any continuous function to arbitrary accuracy on a compact domain \citep{cybenko1989approximation,hornik1991approximation,leshno1993multilayer}, this is not the case for exact computation. For example, a piecewise-linear activation such as the Rectified Linear Unit (ReLU) cannot exactly represent a quadratic function. For this reason, a common setting for studying the limitations of deep neural networks focuses on the ability of ReLU networks to represent continuous piecewise-linear (CPWL) target functions. It is known that sufficiently deep ReLU networks can compute any CPWL function \citep{arora2016understanding,bakaev2025better}. This follows from the result that all CPWL functions can be represented as a linear combination of affine functions composed with the maximum function \citep{ovchinnikov2000max,wang2005generalization}.

    Following these results, many works have studied whether depth is indeed necessary for computing the maximum over $d$ inputs, which we denote by $\Max_d$. It is known that $\Max_3$ cannot be computed exactly by any depth-2 network \citep{mukherjee2017lower}. Conversely, since $\Max_2(x_1,x_2)=\max\{0,x_1\}-\max\{0,-x_1\}+\max\{0,x_2-x_1\}$ is easily computable using three ReLU neurons, one can compute $\Max_d$ using depth $\ceil{\log_2(d)} + 1$ by simulating a tournament structure --- taking pairwise maxima in each layer until the maximum is extracted \citep{arora2016understanding}. \citet{hertrich2021towards} conjectured that $\Max_5$ cannot be computed by depth-3 ReLU networks, but this was recently refuted by \citet{bakaev2025better}, who provided an explicit depth-3 construction for $\Max_5$ and a depth-($\lceil \log_3(d-2) \rceil + 1$) construction for $\Max_d$. Despite this progress, it remains an open problem whether $\Max_d$ can be computed by shallower networks. For instance, it is unknown if $\Max_6$ is computable at depth 3. To date, no CPWL function has been proven uncomputable by depth-3 ReLU networks without imposing specific restrictions on the weights or architecture \citep{hertrich2021towards,matoba2022theoretical,haase2023lower,safran2024many,bakaev2025depth,averkov2025expressiveness,grillo2025depth}.
    
    In this paper, rather than imposing weight or structural restrictions on the network to establish incomputability results, we investigate unconditional lower bound requirements for computing $\Max_d$. Specifically, we establish a depth-width hierarchy demonstrating that while the required width decays as the depth increases, a width super-linear in the input dimension $d$ is still required even when the depth $k$ scales with the input dimension. This result provides a quantitative characterization of the efficiency of depth, showing that while adding layers reduces the width requirement, a fundamental super-linear barrier persists beyond the constant-depth regime. More formally, our main result is the following:
    \begin{theorem}\label{thm:depthk}
        Suppose that $3\le k\le\log_2(\log_2(d))$. Let $\Ncal$ be a depth-$k$ ReLU network such that
        \[
            \Ncal(\bx)=\Max_d(\bx)
        \]
        for all $\bx\in[0,1]^d$. Then, $\Ncal$ has width at least
        \[
            0.1d^{1+\frac{1}{2^{k-2}-1}}.
        \]
    \end{theorem}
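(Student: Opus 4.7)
The plan is induction on the depth $k$, centered on a geometric-combinatorial analysis of how the $\binom{d}{2}$ non-differentiable \emph{ridges} of $\Max_d$ must be realized inside the network.

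\textbf{Ridges and setup.} For each pair $\{i,j\}\subset\{1,\dots,d\}$, $\Max_d$ has an open $(d-1)$-dimensional ridge $R_{ij}\subset\{x_i=x_j\}$ across which the gradient jumps between $\be_i$ and $\be_j$. The non-differentiable locus of any depth-$k$ ReLU network is a union of piecewise-linear ``crease surfaces,'' one per neuron: first-layer creases are global affine hyperplanes, while later layers contribute piecewise-affine creases inside cells of the first-layer arrangement. To reproduce $\Max_d$, every ridge $R_{ij}$ must lie in this crease union with the correct local gradient jump $\pm(\be_i-\be_j)$.

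\textbf{Base case $k=3$.} I would associate each pair $\{i,j\}$ with an evidence set $E_{ij}$ of first-layer neurons whose weight directions, combined via the active second-layer coefficients at a witness point such as $\tfrac{1}{2}(\be_i+\be_j)$, sum to the required jump $\pm(\be_i-\be_j)$. Defining a graph $G$ on the first-layer neurons whose edges record co-membership in some $E_{ij}$, the key combinatorial claim is that each $E_{ij}$ induces a clique in $G$ and distinct pairs yield essentially distinct cliques; a clique-counting form of Tur\'an's theorem then forces the first-layer width to be $\Omega(d^2)$.

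\textbf{Inductive step.} Given a depth-$k$ network $\Ncal$ of width $w$ computing $\Max_d$, my goal is to find a subset $S$ of size $s$ and an affine subspace $V$ such that (i) $V$ lies in a single cell of the first-layer hyperplane arrangement, so the first layer is affine on $V$, and (ii) $\Max_d|_V$ contains a copy of $\Max_s$. Inside such $V$, the network collapses to a depth-$(k-1)$ network of width $\le w$ computing $\Max_s$, so the inductive hypothesis gives $w\ge 0.1\,s^{1+1/(2^{k-3}-1)}$; a short exponent calculation shows that when $s \gtrsim d^{2(2^{k-3}-1)/(2^{k-2}-1)}$, this recovers exactly $w\ge 0.1\,d^{1+1/(2^{k-2}-1)}$. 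The existence of such $V$ is where extremal graph theory re-enters: on a bipartite incidence graph between first-layer neurons and coordinates (edges recording ``heavy'' weight entries), a Kov\'ari--S\'os--Tur\'an estimate produces a large subset of coordinates untouched by most neurons, and translating the associated subcube into a first-layer cell yields the desired $V$.

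\textbf{Main obstacle.} The principal difficulty is the combinatorial lemma that produces $V$ with $s$ matching the target exponent: each inductive step ought to ``square'' some resolving power, accounting for the $2^{k-2}$ factor, but one must propagate both the subset size and the cell-containment guarantee tightly through the KST estimate without losing polynomial factors. A secondary hurdle is degenerate first-layer configurations --- near-zero weights, coincident hyperplanes, or weights spanning a low-dimensional subspace --- which must be handled either by a perturbation argument or by absorbing such neurons into the affine action of the next layer before invoking the extremal bound.
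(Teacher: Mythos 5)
Your overall shape is right — induction on depth, an extremal-graph-theoretic device to isolate a subset of coordinates, dimensionality reduction onto a set where the first layer is affine, and then collapsing a layer so the inductive hypothesis applies — and your exponent bookkeeping ($s\gtrsim d^{(2^{k-2}-2)/(2^{k-2}-1)}$ recovers the target) is correct. But there are genuine gaps, and the two obstacles you flag at the end are precisely the ones that the paper resolves with tools you have not supplied.

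The central missing idea is a \emph{homogenization} step. You want an affine subspace $V$ sitting in a single cell of the first-layer arrangement on which $\Max_d$ restricts to a copy of $\Max_s$. But such a $V$ can almost never be found inside $[0,1]^d$: e.g.\ if some first-layer hyperplane is $\{x_i = x_j\}$ with $i,j\in S$, no subcube where $x_i,x_j$ both vary over a full unit interval avoids crossing it, and more generally the $\binom{s}{2}$ ridges that a copy of $\Max_s$ must contain inevitably force crossings of many first-layer hyperplanes on $[0,1]^d$. You have to translate the subcube \emph{out} of $[0,1]^d$ — and to do that you first need the network to agree with $\Max_d$ on all of $\reals^d$, which is not given. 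The paper achieves this via a homogenization lemma (shift biases, absorb negative biases by dropping inactive neurons, split positive-bias neurons into two unbiased ones and push constants forward), extending exact computation to all of $\reals^d$ at the cost of only doubling the width. Your plan has no analogue of this, and "translating the associated subcube into a first-layer cell" is not available without it.

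Your extremal-graph-theory step also appears to be aimed at a property that is both stronger and harder to get tightly. You propose a bipartite KST estimate to find coordinates "untouched by most neurons." What the argument actually needs is weaker: a clique $I$ of coordinates such that \emph{every} first-layer neuron has at least one non-zero weight on some coordinate outside $I$. The paper gets exactly this by defining a graph on the $d$ coordinates, deleting (for each first-layer neuron, once it is guaranteed to have $\ge 2$ non-zero weights) the edge between the two smallest non-zero weight indices, and applying Turán directly: any clique $I$ in the resulting graph has the desired property, and the neuron count controls the number of deleted edges. Once $I$ is found, the single-cell property is forced by assigning exponentially growing negative values to the $d-r$ coordinates outside $I$, so that for each neuron the term on its largest-index non-zero coordinate strictly dominates the preactivation for all $\bx'\in[0,1]^r$; your proposal does not contain a mechanism that guarantees each neuron's preactivation has a fixed sign over $V$. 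Finally, your base case sketch builds a graph on the \emph{neurons} and invokes a "clique-counting form of Turán," but having $\binom{d}{2}$ distinct cliques in a graph does not force $\Omega(d^2)$ vertices (a small complete graph already contains vastly more cliques than vertices), so that argument, as stated, does not yield the $\Omega(d^2)$ width bound. The paper's base case instead reduces, via the same coordinate-graph/Mantel argument plus the negative-assignment trick, to the known impossibility of computing $\Max_3$ with a depth-2 ReLU network.
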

    Our result provides the first unconditional super-linear lower bound for the computation of $\Max_d$, and more generally for any $1$-Lipschitz continuous piecewise-linear function, using ReLU networks for all constant depths $k \ge 3$. We remark that our lower bound remains super-linear for all $k \le \log_2(\log_2(d)) - \omega(1)$ and naturally transitions to a linear bound as the depth approaches $\Ocal(\log_2(\log_2(d))$.

    Our proof technique proceeds by induction and is based on a combinatorial argument that links the piecewise-linear structure of the network with cliques in a graph induced by the arrangement of hyperplanes in the first hidden layer. A key component of our analysis is the following seminal result from extremal graph theory, which provides a fundamental bound on the density of graphs with excluded cliques.
    
    \begin{theorem}[Turán's Theorem \citep{turan1941extremal}]\label{thm:turan}
        Suppose that $G$ is a graph on $d$ vertices that does not contain a clique of size $r\ge3$. Then, $G$ has at most $\p{1-\frac{1}{r-1}}\frac{d^2}{2}$ edges.
    \end{theorem}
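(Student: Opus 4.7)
The plan is to prove Turán's theorem by a Zykov-style symmetrization argument, which reduces the extremal problem to counting edges in complete multipartite graphs and then applying Cauchy--Schwarz.

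First I would fix a $K_r$-free graph $G$ on $d$ vertices with the maximum number of edges. The central structural claim is that among all such extremal graphs, we may assume non-adjacency is an equivalence relation on $V(G)$, so that $G$ is a complete multipartite graph. To establish this I would use the following symmetrization step: if $u$ and $v$ are non-adjacent vertices with $\deg(u) \geq \deg(v)$, replace $v$ by a twin $v'$ of $u$ (i.e., delete $v$ and insert a new vertex $v'$ with the same neighborhood as $u$, in particular not adjacent to $u$). The resulting graph remains $K_r$-free, because any clique containing $v'$ would correspond to a clique in $G$ containing $u$; and its edge count changes by $\deg(u) - \deg(v) \geq 0$. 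Iterating such replacements and tracking an appropriate monovariant (e.g., the sorted degree sequence under lexicographic order) shows that after finitely many steps non-adjacency becomes transitive, so that $G$ is complete multipartite with parts $V_1, \ldots, V_s$ of sizes $a_1, \ldots, a_s$ summing to $d$.

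Next I would use the $K_r$-free hypothesis to conclude that $s \leq r-1$: otherwise, picking one vertex from each of $r$ distinct parts would yield a copy of $K_r$. The edge count is then
\[
    e(G) = \binom{d}{2} - \sum_{i=1}^s \binom{a_i}{2} = \frac{d^2 - \sum_{i=1}^s a_i^2}{2}.
\]
By the Cauchy--Schwarz inequality, $\sum_{i=1}^s a_i^2 \geq d^2/s \geq d^2/(r-1)$, and substituting gives
\[
    e(G) \leq \frac{d^2 - d^2/(r-1)}{2} = \left(1 - \frac{1}{r-1}\right)\frac{d^2}{2},
\]
which is the bound claimed in the theorem.

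The main obstacle I anticipate is making the symmetrization step fully rigorous: one must verify that each replacement genuinely preserves the $K_r$-free property (clear from the neighborhood identification) and the maximum edge count, and that the iterative procedure terminates in a graph with transitive non-adjacency. Handling ties in degrees requires a careful choice of monovariant, since in principle one could cycle between configurations with the same edge count. Once the reduction to complete multipartite graphs is established, the remaining optimization over part sizes is immediate from Cauchy--Schwarz together with the constraint $s \leq r-1$.
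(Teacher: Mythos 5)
The paper does not prove this statement at all: Turán's theorem is quoted as a classical result with a citation to \citet{turan1941extremal} and used as a black box, so there is no in-paper proof to compare against. Your Zykov-style symmetrization argument is a legitimate classical route, and the second half of your write-up is entirely correct: for a complete multipartite graph with parts of sizes $a_1,\dots,a_s$ one has $e(G)=\frac{1}{2}\bigl(d^2-\sum_i a_i^2\bigr)$, the $K_r$-free hypothesis forces $s\le r-1$, and Cauchy--Schwarz gives $\sum_i a_i^2\ge d^2/(r-1)$, yielding exactly the claimed bound.

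The one place where your argument is not yet complete is the structural claim that an extremal $K_r$-free graph can be taken to be complete multipartite. As you yourself note, iterating the single-vertex replacement only changes the edge count by $\deg(u)-\deg(v)\ge 0$, so extremality forces equality of degrees on non-adjacent pairs but does not by itself rule out an intransitive triple, and the proposed lexicographic monovariant is not obviously monotone under ties. The standard way to close this is not an iteration-plus-monovariant argument but a direct contradiction: if $u\not\sim v$, $v\not\sim w$, $u\sim w$ in an extremal $G$, first observe (as above) that $\deg(u)=\deg(v)=\deg(w)$, then replace \emph{both} $u$ and $w$ by twins of $v$; the new graph is still $K_r$-free and has $e(G)-(\deg(u)+\deg(w)-1)+2\deg(v)=e(G)+1$ edges, contradicting maximality. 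With that two-vertex replacement in place of the termination argument, your proof is complete. (Alternatively, the classical induction on $d$ — delete the vertex set of a maximum clique, which has at most $r-1$ vertices, and count edges — reaches the same bound with less bookkeeping.)
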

    
    A central challenge in deriving computation lower bounds for neural networks is that an impossibility result for depth $k-1$ rarely provides a clear path toward a generalization for depth $k$. Consequently, many known proof techniques are specialized to shallow architectures and offer little scalability. By leveraging Turán's theorem, we demonstrate that the non-linearities in the first hidden layer become redundant when restricted to a specific subset of input coordinates. This allows us to compute the maximum on this subset while effectively ``collapsing" the first hidden layer --- circumventing the inductive barrier and enabling a recursive application of the argument for deeper architectures. We remark that we did not attempt to optimize the constant factor $0.1$, though our method could potentially improve it to $0.5$ for sufficiently large $k$. We prioritized clarity and simplicity in the proof, given that the lower bound is asymptotically governed by the exponent. The reader is referred to Section~\ref{sec:proof_sketch} for a detailed proof sketch, and to Appendix~\ref{app:main_proof} for the complete formal proof.
    
    It is interesting to note that our exact computation lower bound nearly matches the approximation upper bound of \citet{safran2024many}, who prove that depth $2k-3$ and width $d^{1+\frac{1}{2^{k-2}-1}}$ for all $k\ge3$ suffice to obtain arbitrarily good $L_2$ approximation of $\Max_d$ with respect to continuous data distributions. Even though there exists a notable gap between the depth requirements for exact computation (depth $k$) and $L_2$ approximation (depth $2k-3$), our result suggests that the price of exactly matching the non-differentiable ridges of the $\Max_d$ function requires a width that grows at a rate nearly identical to the $L_2$ approximation case, despite the differences in depth and the notion of approximation. Since the approximation error in \citet{safran2024many} can be made arbitrarily small only by increasing the magnitude of the weights, this implies that the bottleneck for computing $\Max_d$ is not the error tolerance, but the inherent structural complexity of its $d$ hyperplanes.

    It is also natural to compare our result to analogs in threshold circuit complexity, where similar lower bounds are sought after for computing Boolean functions in the complexity class $\mathcal{P}$ using bounded-depth circuits that employ threshold activations as non-linearities. While double-exponentially decaying lower bounds of the form $d^{1+c\theta^{-k}}$ exist for the number of wires required for computing the parity function \citep{impagliazzo1997size}, the best-known exponent $\theta=1+\sqrt{2} \approx 2.414$ has remained the state-of-the-art for decades. In comparison, in this paper we obtain $c=4$ and $\theta=2$ for the required number of neurons (gates), which immediately implies a lower bound on the wires, and is stronger than what is known for threshold circuits. This discrepancy suggests potential new avenues for proving circuit lower bounds; however, such results do not follow immediately from neural network lower bounds, and we leave this connection as a tantalizing direction for future work. Notably, proving threshold circuit lower bounds for all $\theta>1$ for certain $\mathcal{NC}^1$-complete problems would imply that $\mathcal{TC}^0\neq\mathcal{NC}^1$, and would solve a major open problem in circuit complexity \citep{chen2019bootstrapping}. Despite this intriguing connection, it is not clear if our proof technique is capable of improving the exponent $\theta$ below $2$, and whether there exist efficient reductions to carry these parameters reliably to threshold circuits.

    In the following subsection, we turn to discuss additional related work. Thereafter, we introduce necessary notation and terminology, and in Section~\ref{sec:proof_sketch} we provide a detailed proof sketch of our result. Formal proofs are deferred to the appendix.

    \subsection{Additional related work}

    \paragraph{Lower bounds via a region-counting argument using deep neural networks}
    
    A simple and intuitive method for establishing lower bounds for CPWL functions involves bounding the maximal number of linear regions a network can realize along a one-dimensional line. This technique, pioneered by \citet{telgarsky2016benefits} to demonstrate a depth separation for highly oscillatory functions, typically requires the target function to have a Lipschitz constant that scales exponentially with depth --- a property often considered pathological in a learning context. While such methods can establish a depth hierarchy, this is achieved by gradually increasing the complexity of the target function via its Lipschitz parameter. In contrast, our result applies to the $\Max_d$ function, which remains $1$-Lipschitz regardless of the input dimension or network depth. This demonstrates that depth provides a fundamental representational advantage even for ``well-behaved" functions whose region complexity is small and thus simple region-counting arguments do not apply.

    \citet{yarotsky2017error, liang2017deep} and \citet{safran2017depth} employ similar region-counting arguments to demonstrate that smooth (and potentially Lipschitz) target functions can be well-approximated by deep architectures. These architectures generate a large number of linear regions and align them to resolve the curvature of the target function. While such results establish a depth hierarchy for approximation, there is significant evidence that these highly-oscillatory representations are difficult to learn efficiently \citep{hanin2019complexity, malach2021connection, vardi2021size}. In contrast, our work focuses on the natural $\Max_d$ function, which does not rely on depth-induced exponential oscillation for its expression. Consequently, our lower bound suggests that depth provides a fundamental representational advantage even for functions that do not suffer from the aforementioned learnability barriers.

    \paragraph{Lower bounds for $\Max_d$}
    \citet{safran2024many} show a width-$d$ lower bound that holds regardless of depth, but this is only a linear lower bound rather than super-linear as ours. Additionally, they establish an approximation lower bound of $1/\poly(d)$ (with respect to the uniform distribution over the unit hypercube) for depth-3 ReLU networks of width $\Ocal(d^2)$. While this result implies an exact computation lower bound, it relies on the additional assumption that the network weights are exponentially bounded. In contrast, our lower bound is unconditional and imposes no limitations on the magnitude of the weights, since we employ a novel construction that identifies an assignment of weights allowing for a recursive application of our main inductive argument. Lastly, while our proof builds upon the technique pioneered in \citet{safran2024many}, our proof is an adaptation and extension of it, since we generalize the technique to arbitrary depths by recursively applying Turán's theorem within an inductive framework, rather than merely applying Mantel's theorem as required in the induction base.
    
    \citet{mukherjee2017lower} and \citet{vsima2025power} establish that no depth-2 ReLU network can compute $\Max_3$, a result which is used in our induction base in our proof. However, the analysis in \citet{vsima2025power} assumes non-negative inputs, and it is not immediately clear if their approach extends to the compact domains we consider here. While our proof for depth-2 incomputability builds upon the techniques in \citet{mukherjee2017lower}, we identify a technical gap in their original argument regarding the characterization of non-differentiable points. Consequently, we provide a self-contained proof and a generalization that addresses this gap.\footnote{The authors in \citep{mukherjee2017lower} suggest that the set of non-differentiable points of a function computed by a depth-2 ReLU network is \emph{precisely} the union of the hyperplanes $\inner{\bw_i,\bx}+b_i=0$ defined by the $i^\text{th}$ hidden neuron. However, this oversight ignores certain edge cases where, for instance, the activation boundaries of two or more neurons coincide to mutually smooth their respective non-differentiable hyperplanes.} (See \lemref{lem:hyperplane_arrangement} and \propref{prop:max3_depth2} in the appendix.)

    \citet{hertrich2021towards} establish an auxiliary proposition similar to ours, demonstrating that a network computing a homogeneous function (see Subsection~\ref{subsec:notation} for a formal definition) can be ``homogenized" by eliminating all bias terms. While we utilize a similar homogenization argument, our approach differs in several key aspects. First, we do not assume the function is globally homogeneous, but only that it coincides with a homogeneous function on a compact domain. Second, our analysis requires a quantitative bound on the width of the resulting network to preserve the super-linear lower bound. Furthermore, we must prove that the homogenized network retains at least two non-zero coordinates in the weights of all first-layer neurons to satisfy the prerequisites for applying Turán's theorem. Consequently, our result can be viewed as a generalization of theirs, and our proof is significantly more involved as it necessitates tighter structural control that was not required in previous works.

    \paragraph{Reductions to circuit complexity lower bounds}

    Our results are fundamentally distinct from existing lower bounds inspired by threshold circuit complexity. While previous works have successfully leveraged reductions to circuit complexity or communication complexity to establish network size lower bounds \citep{mukherjee2017lower, vardi2021size}, those bounds typically yield linear constraints on the total number of neurons. In contrast, we establish a super-linear width requirement that persists even as the depth $k$ grows. Crucially, total size lower bounds often allow for a symmetric trade-off where depth and width are interchangeable resources (namely, doubling depth allows for halving width). Our results, however, characterize a non-symmetric trade-off. We demonstrate that for exact computation of $\Max_d$, depth is a strictly more efficient resource. This highlights that the architectural bottleneck in resolving the intersecting hyperplanes of $\Max_d$ is inherently tied to depth in a way that total size bounds cannot capture.
    
    \subsection{Preliminaries and notation}\label{subsec:notation}

    \paragraph{Notation and terminology}
    We let $[n]$ be shorthand for the set $\{1,\ldots,n\}$. We denote vectors using bold-faced letters (e.g.\ $\bx$). Given a vector $\bx=(x_1,\ldots,x_d)$, we let $\norm{\bx}$ denote its Euclidean norm. Throughout, we use the notation $\Max_d(\bx)\coloneqq\max\{x_1,\ldots,x_d\}$ for the maximum function, and $\relu{x}=\max\{0,x\}$ for the ReLU activation function. A function or network $f$ is (positively) homogeneous if for all $c>0$ and all $\bx\in\reals^d$, $f(c\bx)=cf(\bx)$. A function $f:D\to\reals$ defined in some domain $D\subseteq\reals^d$ is \emph{continuous piecewise-linear (CPWL)} if there exists a finite partition $D=\cup_{i}D_i$ such that $f$ is linear on $D_i$ for all $i$, where each $D_i$ is a closed set.

    \paragraph{Neural networks}

    We consider fully connected, feed-forward neural networks, computing functions from $\reals^d$ to $\reals$. We focus on neural networks which employ the ReLU function as a non-linear activation. A ReLU neural network consists of layers of neurons, where in every layer except for the output neuron, an affine function of the inputs is computed, followed by a computation of the non-linear activation function $\relu{\cdot}$. The single output neuron simply computes an affine transformation of its inputs. Each layer with a non-linear activation is called a \emph{hidden layer}, and the \emph{depth} of a network is defined as the number of hidden layers plus one, and is generally denoted by $k$. The \emph{width} of a network is defined as the number of neurons in the largest hidden layer, and the \emph{size} of the network is the total number of neurons across all layers.
        
    \section{Techniques and proof sketch}\label{sec:proof_sketch}

    In this section, we provide a detailed proof sketch of Theorem~\ref{thm:depthk}. Our proof is centered on the geometric intuition that a ReLU network of depth $k$ computing $\Max_d$ must account for $\binom{d}{2}$ distinct non-linearities: one for each pair of coordinates $i,j$ that takes effect when $x_i$ and $x_j$ are the maximal values in $\bx \in \reals^d$, and one ``overtakes" the other. If these non-linearities are absent in the first hidden layer, the burden of computing them must fall on the subsequent, deeper layers.

    Specifically, if the network is sufficiently narrow, we use Turán's theorem to identify a clique of size $r$ among the input coordinates for which these pairwise non-linearities are missing in the first layer. A key technical proposition allows us to transition from a network computing $\Max_d$ on a compact domain to a network of the same depth (and at most twice the width) that computes $\Max_d$ on all of $\reals^d$. By identifying an assignment of extremely negative values for the $d-r$ coordinates outside the clique, we can ensure that the neurons in the first hidden layer never change their activation state regardless of the values of the remaining $r$ coordinates, that are all bounded in $[0,1]$. This reduces the first hidden layer to a simple linear transformation of its input, allowing us to collapse it and construct a depth-$(k-1)$ network that computes $\Max_r$ on $[0,1]^r$. The theorem then follows by applying our induction hypothesis to this reduced architecture.

    Below, we further detail each significant step in the proof, starting from the induction base.

    \subsection{Step 1: The induction base --- a depth-3, width-$0.1d^2$ lower bound}
        As the base case, we consider depth-3 neural networks, and we show that a width of approximately $\frac{1}{8}d^2$ is necessary for computing $\Max_d$. More formally, we prove the following lower bound.
        \begin{theorem}\label{thm:depth3}
            Let $d\ge1$ and suppose that $\Ncal$ is a depth-3 ReLU network such that
            \[
                \Ncal(\bx)=\Max_d(\bx)
            \]
            for all $\bx\in[0,1]^d$. Then, $\Ncal$ has width at least
            \[
                \floor{\p{\frac18-\frac{1}{4d}-\frac{1}{2d^2}}d^2}.
            \]
        \end{theorem}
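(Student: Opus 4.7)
The plan is to follow the recipe outlined in Section~\ref{sec:proof_sketch}, specialized to $r=3$ where Tur\'an's theorem reduces to Mantel's theorem. Suppose for contradiction that $\Ncal$ has width $W$ strictly below the stated bound (approximately $d^2/8$). The target is to isolate three input coordinates $\{j,k,l\}\subset[d]$ on which the first hidden layer behaves linearly after fixing the remaining $d-3$ coordinates to specific extreme values; folding the now-linear first layer into the second layer will then produce a depth-$2$ ReLU network computing $\Max_3$ on $[0,1]^3$, contradicting the depth-$2$ lower bound for $\Max_3$ recorded as \propref{prop:max3_depth2}.

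To set this up, I would first invoke the paper's homogenization proposition (an adaptation of \cite{hertrich2021towards} with quantitative width and support control): at the cost of at most doubling the width, replace $\Ncal$ by a homogeneous depth-$3$ network $\tilde{\Ncal}$ that agrees with $\Max_d$ on all of $\reals^d$ and whose first-layer weight vectors all contain at least two non-zero entries (single-coordinate neurons are affine on each half-line and can be absorbed into a linear skip path). Next, assign to each first-layer neuron $i$ of $\tilde{\Ncal}$ a \emph{designated pair} $\{j_i,k_i\}\subset[d]$ given by its two largest-magnitude coordinates, and form the graph $G=([d],E)$ with $E=\{\{j_i,k_i\}:i\in[2W]\}$, so $|E|\le 2W$. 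Since the hypothesized upper bound on $W$ keeps $2W$ strictly below $\lfloor d^2/4\rfloor$, the complement graph $\bar{G}$ has more than $\lfloor d^2/4\rfloor$ edges, and Mantel's theorem produces a triangle in $\bar{G}$: a triple $\{j,k,l\}$ such that for every first-layer neuron $i$ at least one of $\{j_i,k_i\}$ lies outside $\{j,k,l\}$, i.e., the largest-magnitude entry of $\bw^{(i)}$ lies in $[d]\setminus\{j,k,l\}$.

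To exploit this triple, I would probe $\tilde{\Ncal}$ along the affine slice $\bx=\bx_{\{j,k,l\}}-M\mathbf{t}_{[d]\setminus\{j,k,l\}}$ with $x_j,x_k,x_l\in[0,1]$, where $\mathbf{t}\in(0,\infty)^{d-3}$ is a generic positive vector arranged so that $\langle\bw^{(i)}_{[d]\setminus\{j,k,l\}},\mathbf{t}\rangle\neq 0$ for all $i$ (the failure set is a finite union of hyperplanes in $\mathbf{t}$-space, hence negligible). Since the dominant entry of each $\bw^{(i)}$ lies outside $\{j,k,l\}$, the first-layer pre-activation $\langle\bw^{(i)},\bx\rangle$ has magnitude growing linearly in $M$ and sign independent of $(x_j,x_k,x_l)\in[0,1]^3$ once $M$ is sufficiently large; hence each first-layer ReLU simplifies to a linear function (or identically zero) on the slice, and the first layer folds into the second layer. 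Since $\Max_d(\bx)=\max\{x_j,x_k,x_l\}$ on this slice for $M$ large, the resulting depth-$2$ network exactly computes $\Max_3$ on $[0,1]^3$, contradicting \propref{prop:max3_depth2}.

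The main obstacle will be the quantitative analysis in the reduction step: one must show that the $M$-linear outside contribution dominates the bounded inside contribution (at most $3\|\bw^{(i)}\|_\infty$) \emph{uniformly} over all $2W$ neurons and over $(x_j,x_k,x_l)\in[0,1]^3$. The designated-pair construction is what makes this possible, since it guarantees that each $\bw^{(i)}$'s largest-magnitude entry sits among the coordinates driven to infinity. A related subtlety is the homogenization step, which must simultaneously preserve the agreement with $\Max_d$ globally on $\reals^d$, the ``two non-zero entries per neuron'' normalization, and only cost a factor-of-two blowup in width; carefully tracking these constants through Mantel's theorem is exactly what yields the quantitative bound $\lfloor(1/8-1/(4d)-1/(2d^2))d^2\rfloor$ stated in the theorem.
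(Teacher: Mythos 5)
Your proposal follows essentially the same route as the paper's proof: homogenize via \propref{prop:compact_to_unbounded} to extend exact computation to $\reals^d$ while guaranteeing two non-zero first-layer weights per neuron, build a graph from one designated pair per neuron, apply Mantel's theorem to extract a triangle avoided by every pair, freeze the first layer's activation pattern by driving the remaining coordinates to large negative values, collapse the now-linear first layer, and contradict \propref{prop:max3_depth2}. The only (harmless) deviations are that you designate each neuron's two largest-magnitude coordinates rather than its two smallest non-zero indices, and you slightly overstate the consequence of the triangle --- it only guarantees that \emph{one} of the two designated coordinates, not necessarily the largest-magnitude entry, lies outside $\{j,k,l\}$ --- but this still suffices, since your generic choice of $\mathbf{t}$ only requires some non-zero weight mass outside the triple, just as the paper's exponential assignment only requires one non-zero coordinate of index larger than the clique.
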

        The proof of the above theorem, which appears in \appref{app:depth3_proof}, is based on a similar technique which was developed in \citet{safran2024many}, and uses Mantel's theorem (a particular case of Turán's theorem for cliques of size 3) to identify a triplet of coordinates on which the network computes $\Max_3$ while maintaining a fixed activation pattern in the first layer.

        As $d \to \infty$, this lower bound approaches $\frac{1}{8}d^2$, which represents the optimal constant achievable for depth 3 using our current framework.
        
    \subsection{Step 2: From computation on the unit hypercube to $\reals^d$}\label{subsec:step2}

    Following the induction base, we now detail the derivation of the inductive step. This stage is critical for Step 4 (\subsecref{subsec:step4}), where we perform a substitution of extremely negative values while ensuring we remain within the valid domain of computation for $\Max_d$. To this end, we prove \propref{prop:compact_to_unbounded}, which establishes that the efficient computation of $\Max_d$ on a compact domain implies its efficient computation on all of $\reals^d$.
    
    While this part of the proof is inspired by the ``homogenization" argument in \citet{hertrich2021towards}, our approach differs in two key aspects: unlike their proof which crucially relies on the target function being homogeneous, we only assume that this is the case on a compact domain and thus their proof technique is not directly applicable; and we guarantee the existence of a network where all first-layer weights have at least two non-zero coordinates, which is a prerequisite for the graph-theoretic analysis in Step~3 (\subsecref{subsec:step3}).
    
    We begin by shifting the network with a random bias, allowing it to compute the maximum in a neighborhood of the origin. The proof of the proposition then proceeds by recursively ``pushing" non-zero biases from the first hidden layer forward through the network. This process preserves the function's behavior in a sufficiently small neighborhood of the origin through the following logic:
    \begin{itemize}
        \item
        Negative biases: If a neuron has a negative bias, it becomes inactive in a small enough neighborhood of the origin and can be removed without affecting the output (see Figure~\ref{fig:homogenize_negative_bias} for an illustration).
        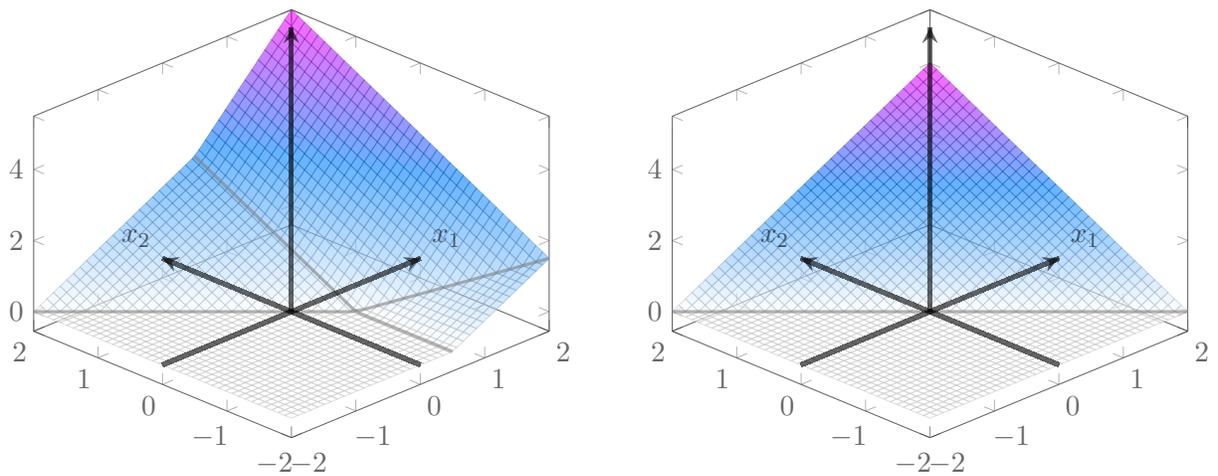
\begin{figure}
            \centering
            \begin{tikzpicture}[scale=1]
                \begin{axis}[
                    view={315}{35},
                    domain=-2:2,
                    domain y=-2:2,
                    samples=40,
                    samples y=40,
                    colormap/cool,
                    zmax=5.5,
                    opacity=0.6,  % <-- transparency
                    ]
                    % ReLU surface
                    \addplot3[surf,grid=none]{max(0, x+y) + max(0, x - 0.5)};
        
                    % 2. Gray line along the hyperplane x + y = 0
                    % At x+y=0, the function value is: max(0, 0) + max(0, x - 0.5) = max(0, x - 0.5)
                    \addplot3[
                        gray, 
                        very thick, 
                        samples y=0, 
                        domain=-2:2, 
                        on layer=axis foreground
                    ] 
                    (x, -x, {max(0, x - 0.5)});
            
                    % 3. Gray line along the hyperplane x = 0.5
                    % At x=0.5, the function value is: max(0, 0.5 + y) + max(0, 0) = max(0, 0.5 + y)
                    \addplot3[
                        gray, 
                        very thick, 
                        samples y=0, 
                        domain=-2:2, 
                        on layer=axis foreground
                    ] 
                    (0.5, x, {max(0, 0.5 + x)});
        
                    % 3. Solid Axis Arrows using \addplot3
                    % X-axis
                    \addplot3[black, ultra thick, -stealth, samples=2, domain=-2:2, on layer=axis foreground] 
                        (x, 0, 0) node[above right] {$x_1$};
                    
                    % Y-axis
                    \addplot3[black, ultra thick, -stealth, samples=2, domain=-2:2, on layer=axis foreground] 
                        (0, x, 0) node[above left] {$x_2$};
                    
                    % Z-axis
                    \addplot3[black, ultra thick, -stealth, samples=2, domain=0:8, on layer=axis foreground] 
                        (0, 0, x) node[above] {};
                \end{axis}
            \end{tikzpicture}\quad\quad
            \begin{tikzpicture}[scale=1]
                \begin{axis}[
                    view={315}{35},
                    domain=-2:2,
                    domain y=-2:2,
                    samples=40,
                    samples y=40,
                    colormap/cool,
                    zmax=5.5,
                    opacity=0.6,  % <-- transparency
                    ]
                    % ReLU surface
                    \addplot3[surf,grid=none]{max(0, x+y)};
            
                    % 2. Gray line along the hyperplane x + y = 0
                    % At x+y=0, the function value is: max(0, 0) + max(0, x - 0.5) = max(0, x - 0.5)
                    \addplot3[
                        gray, 
                        very thick, 
                        samples y=0, 
                        domain=-2:2, 
                        on layer=axis foreground
                    ] 
                    (x, -x, 0);

                    % 3. Solid Axis Arrows using \addplot3
                    % X-axis
                    \addplot3[black, ultra thick, -stealth, samples=2, domain=-2:2, on layer=axis foreground] 
                        (x, 0, 0) node[above right] {$x_1$};
                    
                    % Y-axis
                    \addplot3[black, ultra thick, -stealth, samples=2, domain=-2:2, on layer=axis foreground] 
                        (0, x, 0) node[above left] {$x_2$};
                    
                    % Z-axis
                    \addplot3[black, ultra thick, -stealth, samples=2, domain=0:8, on layer=axis foreground] 
                        (0, 0, x) node[above] {};
                \end{axis}
            \end{tikzpicture}
            \caption{The function $\relu{x_1+x_2}+\relu{x_1-0.5}$ (left) and the resulting function $\relu{x_1+x_2}$ (right) when the neuron with negative bias is removed. Since the neuron is inactive in a neighborhood of the origin, removing it maintains the computation near the origin.}
            \label{fig:homogenize_negative_bias}
        \end{figure}
        \item 
        Positive biases: If a neuron has a positive bias, the affine transformation it computes can be simulated near the origin by substituting it with two unbiased (homogeneous) neurons and adjusting the biases in the subsequent layer (see Figure~\ref{fig:homogenize_positive_bias} for an illustration).
        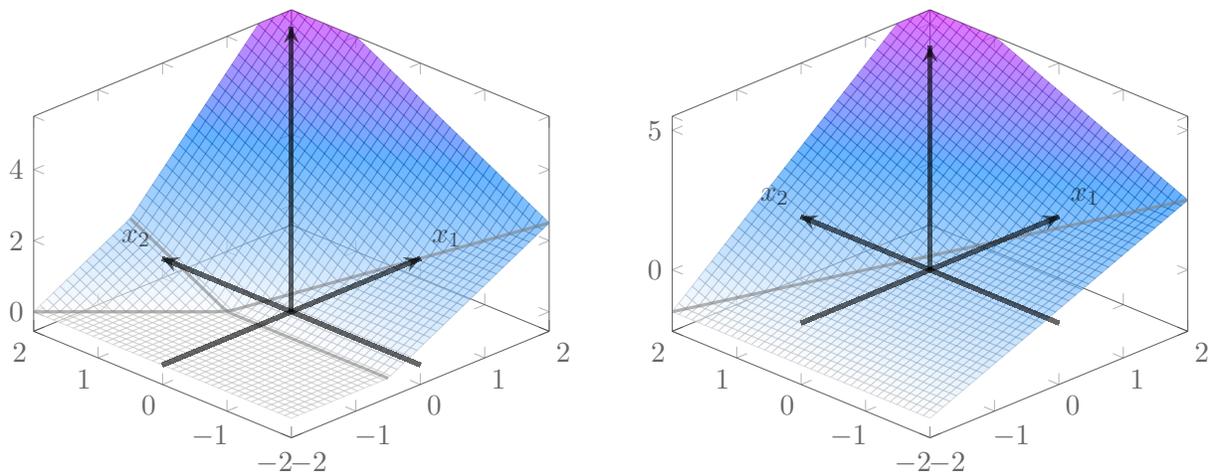
\begin{figure}
            \centering
            \begin{tikzpicture}[scale=1]
                \begin{axis}[
                    view={315}{35},
                    domain=-2:2,
                    domain y=-2:2,
                    samples=40,
                    samples y=40,
                    colormap/cool,
                    zmax=5.5,
                    opacity=0.6,  % <-- transparency
                    ]
                    % ReLU surface
                    \addplot3[surf,grid=none]{max(0, x+y) + max(0, x + 0.5)};
        
                    % 2. Gray line along the hyperplane x + y = 0
                    % At x+y=0, the function value is: max(0, 0) + max(0, x - 0.5) = max(0, x - 0.5)
                    \addplot3[
                        gray, 
                        very thick, 
                        samples y=0, 
                        domain=-2:2, 
                        on layer=axis foreground
                    ] 
                    (x, -x, {max(0, x + 0.5)});
            
                    % 3. Gray line along the hyperplane x = 0.5
                    % At x=0.5, the function value is: max(0, 0.5 + y) + max(0, 0) = max(0, 0.5 + y)
                    \addplot3[
                        gray, 
                        very thick, 
                        samples y=0, 
                        domain=-2:2, 
                        on layer=axis foreground
                    ] 
                    (-0.5, x, {max(0, -0.5 + x)});
        
                    % 3. Solid Axis Arrows using \addplot3
                    % X-axis
                    \addplot3[black, ultra thick, -stealth, samples=2, domain=-2:2, on layer=axis foreground] 
                        (x, 0, 0) node[above right] {$x_1$};
                    
                    % Y-axis
                    \addplot3[black, ultra thick, -stealth, samples=2, domain=-2:2, on layer=axis foreground] 
                        (0, x, 0) node[above left] {$x_2$};
                    
                    % Z-axis
                    \addplot3[black, ultra thick, -stealth, samples=2, domain=0:8, on layer=axis foreground] 
                        (0, 0, x) node[above] {};
                \end{axis}
            \end{tikzpicture}\quad\quad
            \begin{tikzpicture}[scale=1]
                \begin{axis}[
                    view={315}{35},
                    domain=-2:2,
                    domain y=-2:2,
                    samples=40,
                    samples y=40,
                    colormap/cool,
                    zmax=5.5,
                    opacity=0.6,  % <-- transparency
                    ]
                    % ReLU surface
                    \addplot3[surf,grid=none]{max(0, x+y) + max(0, x) - max(0, -x) + 0.5};
        
                    % 2. Gray line along the hyperplane x + y = 0
                    % At x+y=0, the function value is: max(0, 0) + max(0, x - 0.5) = max(0, x - 0.5)
                    \addplot3[
                        gray, 
                        very thick, 
                        samples y=0, 
                        domain=-2:2, 
                        on layer=axis foreground
                    ] 
                    (x, -x, {x + 0.5});
        
                    % 3. Solid Axis Arrows using \addplot3
                    % X-axis
                    \addplot3[black, ultra thick, -stealth, samples=2, domain=-2:2, on layer=axis foreground] 
                        (x, 0, 0) node[above right] {$x_1$};
                    
                    % Y-axis
                    \addplot3[black, ultra thick, -stealth, samples=2, domain=-2:2, on layer=axis foreground] 
                        (0, x, 0) node[above left] {$x_2$};
                    
                    % Z-axis
                    \addplot3[black, ultra thick, -stealth, samples=2, domain=0:8, on layer=axis foreground] 
                        (0, 0, x) node[above] {};
                \end{axis}
            \end{tikzpicture}
            \caption{The function $\relu{x_1+x_2}+\relu{x_1+0.5}$ (left) and the resulting function $\relu{x_1+x_2}+\relu{x_1}-\relu{-x_1}+0.5$ (right) when the neuron with positive bias is removed and replaced with two homogeneous neurons, and the bias of the output neuron is modified. The function maintains its behavior near the origin while pushing biases one layer forward.}
            \label{fig:homogenize_positive_bias}
        \end{figure}
    \end{itemize}
    Since each original neuron is replaced by at most two homogeneous neurons, the width of the resulting network is at most twice that of the original. Because the final network is homogeneous and coincides with $\Max_d$ near the origin, a homogeneity argument implies they must coincide on all of $\reals^d$. Finally, we demonstrate that with probability one, all neurons in the constructed network possess at least two non-zero weights, facilitating the next step of the proof.
    
    \subsection{Step 3: Constructing a first-layer weight graph with a large clique}\label{subsec:step3}
    
    In this step, we establish a formal connection between the weights of the first hidden layer and extremal graph theory. Based on the technique in \citet{safran2024many}, we construct a graph $G_{\mathcal{N}}$ derived from the weight vectors of the width-$n$ network $\mathcal{N}$ as follows:
    \begin{itemize}
        \item
        We define a graph on $d$ vertices, where each vertex represents an input coordinate.
        \item
        For each neuron $i$ in the first hidden layer, let $\bw_i \in \reals^d$ be its weight vector. We remove an edge $(j, k)$ from the initial complete graph $K_d$ based on the indices of the non-zero entries in $\bw_i$. Specifically, if $\bw_i$ contains at least two non-zero elements, we remove the edge which corresponds to the two smallest indices.
        \item
        Since each of the $n$ neurons can remove at most one edge, the resulting graph $G_{\mathcal{N}}$ contains at least $\binom{d}{2} - n$ edges.
        \item
        If $n$ is sufficiently small, then by Turán's theorem (Theorem~\ref{thm:turan}), $G_{\mathcal{N}}$ must contain a clique of size $r$.
    \end{itemize}
    This clique $I \subseteq \{1, \dots, d\}$ identifies a subset of coordinates with a crucial property: for any pair of coordinates $j, k \in I$, there is no neuron in the first hidden layer whose activation is determined solely by $x_j$ and $x_k$ (see Figure~\ref{fig:induced_graph} for an illustration). This implies that the first layer is ``blind" to the pairwise interactions of these coordinates, which forces the burden of computing the transitions of $\Max_r$ onto the subsequent $k-1$ layers. This structural gap is what enables the layer-reduction step in the final stages of the proof.
    \begin{figure} [h]
        \centering
        \begin{tikzpicture}[scale=1.2]
            % --- Define Colors ---
            \definecolor{colorw1}{RGB}{200, 0, 0}   % Red
            \definecolor{colorw2}{RGB}{0, 0, 200}   % Blue
            \definecolor{colorw3}{RGB}{0, 150, 0}   % Green
            \definecolor{colorw4}{RGB}{200, 120, 0} % Orange
    
            % ==========================================================
            % LEFT: Colored K_5 (The "Potential" Removals)
            % ==========================================================
            \begin{scope}[xshift=-4cm]
                %\node at (0, 2.2) {\textbf{Initial $K_d$ with Weights}};
                \foreach \i in {1,...,5} {
                    \node[circle, fill=black, inner sep=1.5pt] (u\i) at ({90 - (\i-1)*72}:1.3) {};
                    \node at ({90 - (\i-1)*72}:1.6) {\small $v_\i$};
                }
                \begin{scope}[very thick]
                    % Colored edges to be removed
                    \draw[color=colorw1] (u3) -- (u4);
                    \draw[color=colorw2] (u2) -- (u4);
                    \draw[color=colorw3] (u1) -- (u5);
                    \draw[color=colorw4] (u4) -- (u5);
                    % Remaining black edges
                    \draw (u1) -- (u2); \draw (u1) -- (u3); \draw (u1) -- (u4);
                    \draw (u2) -- (u3); \draw (u2) -- (u5); \draw (u3) -- (u5);
                \end{scope}
            \end{scope}
    
            % ==========================================================
            % MIDDLE: Weight Vectors (The "Legend")
            % ==========================================================
            \begin{scope}[xshift=0cm]
                \node[anchor=west, color=colorw1] at (-1.5, 0.9)  {\small $\mathbf{w}_1 \coloneqq (0, 0, 1, -1, 0)$};
                \node[anchor=west, color=colorw2] at (-1.5, 0.3)  {\small $\mathbf{w}_2 \coloneqq (0, -1, 0, 1, 0)$};
                \node[anchor=west, color=colorw3] at (-1.5, -0.3) {\small $\mathbf{w}_3 \coloneqq (1, 0, 0, 0, -1)$};
                \node[anchor=west, color=colorw4] at (-1.5, -0.9) {\small $\mathbf{w}_4 \coloneqq (0, 0, 0, 1, -1)$};
                
            \end{scope}
    
            % ==========================================================
            % RIGHT: Resulting Graph G_N (The Survival)
            % ==========================================================
            \begin{scope}[xshift=4cm]
                %\node at (0, 2.2) {\textbf{Resulting Graph $G_{\mathcal{N}}$}};
                \foreach \i in {1,...,5} {
                    \node[circle, fill=black, inner sep=1.5pt] (v\i) at ({90 - (\i-1)*72}:1.3) {};
                    \node at ({90 - (\i-1)*72}:1.6) {\small $v_\i$};
                }
                \begin{scope}[very thick]
                    % Only the surviving black edges
                    \draw (v1) -- (v2);
                    \draw (v1) -- (v3);
                    \draw (v1) -- (v4);
                    \draw (v2) -- (v3);
                    \draw (v2) -- (v5);
                    \draw (v3) -- (v5);
                \end{scope}
            \end{scope}
    
        \end{tikzpicture}
        \caption{The construction of the graph $G_{\mathcal{N}}$ that is induced by the first-hidden-layer weights of $\Ncal$. \textbf{Left:} The complete graph $K_5$ where edges are colored according to the first-layer neuron $\mathbf{w}_i$ that removes them. \textbf{Middle:} The weight vectors of the first hidden layer whose non-zero coordinates dictate the edge removal process. \textbf{Right:} The final graph $G_{\mathcal{N}}$ after the removal of all colored edges. If the width $n$ is small relative to $d$, Turán's theorem ensures that the graph still contains a large clique (e.g., the triangle formed by the first three coordinates).}
        \label{fig:induced_graph}
    \end{figure}
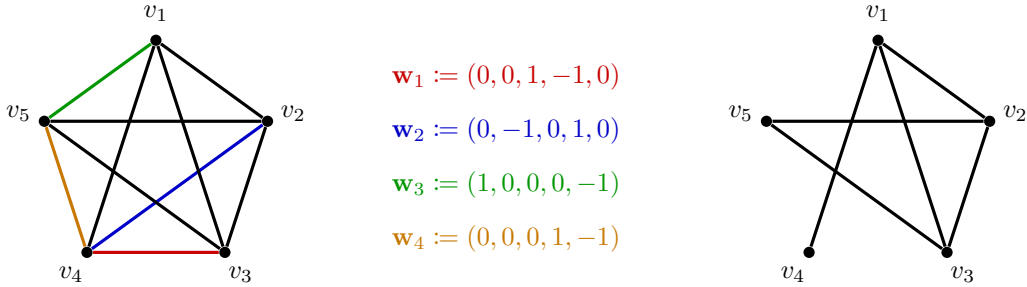

    \subsection{Step 4: Dimensionality reduction via coordinate assignment}\label{subsec:step4}Utilizing the structural property of the weight-graph established in Step 3 above, we now construct an assignment of inputs that forces the network to compute a lower-dimensional version of the maximum function. Recall that in Step 2 (\subsecref{subsec:step2}), we extended the computation of $\Max_d$ to all of $\mathbb{R}^d$ while ensuring that every neuron in the first hidden layer has a weight vector with at least two non-zero coordinates.
    
    Assume, without loss of generality, that the clique $I$ identified in the previous step corresponds to the first $r$ coordinates. We prove that there exists a substitution of sufficiently negative values for the remaining coordinates, $x_{r+1}, \dots, x_d$, such that the resulting network computes $\Max_r$ on the unit hypercube $[0,1]^r$. The core of this argument lies in assigning exponentially increasing negative values to these coordinates. By doing so, we ensure that for each neuron, the non-zero coordinate with the largest index (among the indices $r+1, \dots, d$) dominates the pre-activation, effectively dictating the activation pattern of the neuron for all $\bx' \in [0,1]^r$.
    
    Crucially, our construction of $G_{\mathcal{N}}$ in Step 3 ensures that every neuron in the first layer has at least one non-zero weight associated with a coordinate outside the clique $I = \{1, \dots, r\}$. Consequently, for any input in $[0,1]^r$, these neurons remain in a fixed activation state (either ``always on" or ``always off"). Since fixing these inputs preserves the network architecture while reducing the first hidden layer to a simple linear transformation, we have effectively constructed a neural network of the same depth that computes $\Max_r$ on $[0,1]^r$ without utilizing any non-linearities in the first hidden layer. (See Figure~\ref{fig:assignment} for an illustration.)

    \begin{figure}[h]
        \centering
        \begin{tikzpicture}[scale=0.75,
            input square/.style={
                rectangle, 
                draw=black, 
                fill=black!5, 
                very thick, 
                minimum size=4mm, % 50% smaller
                inner sep=0pt
            },
            conn/.style={
                ->, 
                >=stealth, 
                gray!50, 
                semithick,
                shorten >=1pt, 
                shorten <=1pt
            },
            label text/.style={font=\small}
        ]
        
            % --- Spacing Parameters ---
            \def\dx{3.5} 
            \def\dy{2.2} 
        
            % 1. Input Layer (5 small squares)
            %\foreach \i in {1,...,5} {
            %    \pgfmathsetmacro{\ypos}{(3 - \i) * 2}
                %\node[input square, label={[label text]left:$x_\i$}] (I-\i) at (-0.5, \ypos) {};
            %}
    
            % 1. Input Layer (5 small squares)
            \node[input square, label={[label text]left:$x_1\in[0,1]$}] (I-1) at (-0.5, 4) {};
            \node[input square, label={[label text]left:$x_2\in[0,1]$}] (I-2) at (-0.5, 2) {};
            \node[input square, label={[label text]left:$x_3\in[0,1]$}] (I-3) at (-0.5, 0) {};
            \node[input square, label={[label text]left:$x_4=-\infty$}] (I-4) at (-0.5, -2) {};
            \node[input square, label={[label text]left:$x_5=-\infty$}] (I-5) at (-0.5, -4) {};

            % 2. First hidden layer
            \node[neuron] (H1-1) at (\dx, 1.5*\dy) {};
            \pic[scale=1.05] at (H1-1) {highlightedrelu1};
            \node[neuron] (H1-2) at (\dx, 0.5*\dy) {};
            \pic[scale=1.05] at (H1-2) {highlightedrelu2};
            \node[neuron] (H1-3) at (\dx, -0.5*\dy) {};
            \pic[scale=1.05] at (H1-3) {highlightedrelu3};
            \node[neuron] (H1-4) at (\dx, -1.5*\dy) {};
            \pic[scale=1.05] at (H1-4) {highlightedrelu4};
            
            % 3. Hidden Layers 2-3
            % We use a loop to draw the neurons and then place the pic on top
            \foreach \layer in {2,3} {
                \foreach \i in {1,...,4} {
                    \pgfmathsetmacro{\ypos}{(2.5 - \i) * \dy}
                    % Draw the neuron circle
                    \node[neuron] (H\layer-\i) at (\layer*\dx, \ypos) {};
                    % Place the pic at the node's center, scaled up to 1.25 to fill the 15mm space
                    \pic[scale=1.05] at (H\layer-\i) {reluactivation};
                }
            }
        
            % 4. Output Layer (Centered at y=0)
            \node[neuron] (O) at (4*\dx, 0) {};
            \pic[scale=1.05] at (O) {linearactivation};
        
            % 5. Connections
            \foreach \i in {1,...,5} \foreach \j in {1,...,4} \draw[conn] (I-\i) -- (H1-\j);
            \foreach \i in {1,...,4} \foreach \j in {1,...,4} \draw[conn] (H1-\i) -- (H2-\j);
            \foreach \i in {1,...,4} \foreach \j in {1,...,4} \draw[conn] (H2-\i) -- (H3-\j);
            \foreach \i in {1,...,4} \draw[conn] (H3-\i) -- (O);
        
        \end{tikzpicture}
        \caption{An illustration of the effects of the negative assignment of values explained in Step~4 (Subsection~\ref{subsec:step4}). By constructing a clique from the first three coordinates, we ensure that every neuron in the first hidden layer possesses a non-zero weight for some coordinate $j \ge 4$. Since the clique coordinates are bounded in $[0,1]$, assigning sufficiently large negative values to $x_4$ and $x_5$ ensures that each neuron's pre-activation is dominated by the term corresponding to its largest non-zero weight index. Consequently, these operating ranges (highlighted in red) remain strictly within either the positive or negative rays of the ReLU, rendering the first hidden layer's non-linearities redundant. Crucially, as Step~2 (Subsection~\ref{subsec:step2}) guarantees that the network computes $\text{Max}_d$ globally, this negative assignment forces the maximum to be attained by one of the first three coordinates. This effectively reduces the network's computation to $\text{Max}_3$ on the clique's domain, facilitating the lower bound proof.}
        \label{fig:assignment}
    \end{figure}
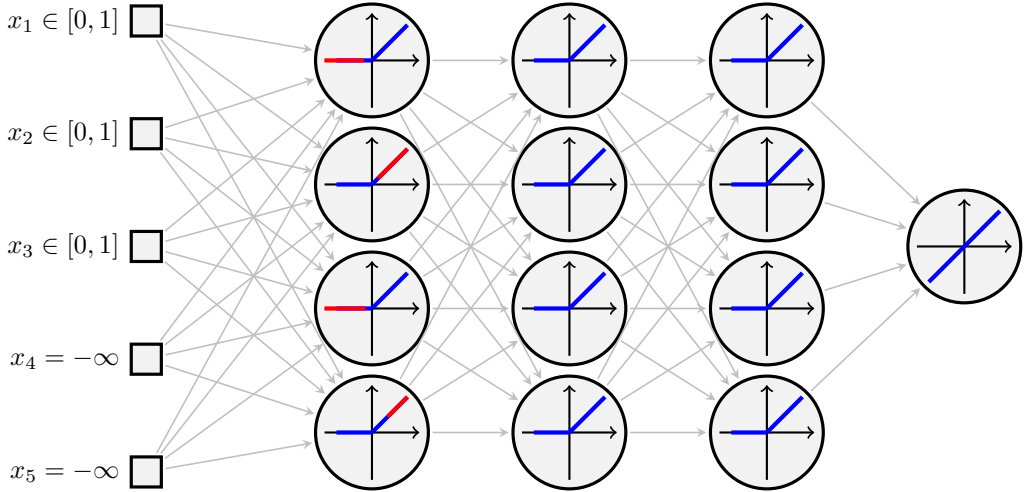

    \subsection{Step 5: Layer collapse and inductive contradiction}
    
    In this final step, we consolidate the previous results to complete the proof. From the construction in Step 4 above, we have obtained a network that computes $\Max_r$ on $[0,1]^r$, but where every neuron in the first hidden layer maintains a fixed activation pattern across the entire domain. This property allows us to ``collapse" the first hidden layer without altering the function computed by the network.
    
    Because the first layer is restricted to its linear region, it effectively computes a simple linear transformation of its input. This transformation can be absorbed into the weights of the second hidden layer, resulting in a new network that computes $\Max_r$ on $[0,1]^r$ with its depth reduced by one to $k-1$, and its width preserved. Finally, we apply our induction hypothesis to this reduced architecture. If the original network were too narrow, the resulting clique size $r$ (determined by Turán’s theorem) would be large enough to violate the width requirements for depth-$(k-1)$ networks, thus yielding a contradiction and concluding the proof of the theorem.
    
    \subsection*{Acknowledgments}
    This research is supported by Israel Science Foundation Grant 1753/25. I thank Dean Doron for helpful discussions regarding connections to circuit complexity.

    \bibliographystyle{abbrvnat}
    \bibliography{citations}

    \appendix

    \section{Proofs}
    
    \subsection{Auxiliary Lemmas}
    
    We first define a hyperplane arrangement as the finite union of hyperplanes in $\reals^d$. With this definition, we can define our first auxiliary lemma, used to prove the limitations of depth-2 ReLU networks for approximating piecewise linear functions. 
    
    \begin{lemma}\label{lem:hyperplane_arrangement}
        The set of non-differentiable points of a depth-2, width-$n$ ReLU network is a hyperplane arrangement with at most $n$ hyperplanes.
    \end{lemma}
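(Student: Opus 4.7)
The plan is to expand $\Ncal$ as an explicit sum of ReLU units and use the fact that each unit is smooth away from its own activation hyperplane, so $\Ncal$ can only fail to be differentiable where some unit's activation boundary passes. I would write the function computed by $\Ncal$ in the form
\[
    f(\bx) = \sum_{i=1}^{n} a_i \relu{\inner{\bw_i, \bx} + b_i} + c,
\]
where $\bw_i, b_i$ are the weights and bias of the $i$-th first-layer neuron, $a_i$ is its output weight, and $c$ is the output bias. For each $i$ with $\bw_i \neq \bzero$ I would define the hyperplane $H_i = \{\bx : \inner{\bw_i, \bx} + b_i = 0\}$; neurons with $\bw_i = \bzero$ contribute only a constant and can be ignored.

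On each connected component of $\reals^d \setminus \bigcup_i H_i$, the sign of every pre-activation is fixed, so each ReLU is identically zero or identically equal to its argument, whence $f$ is affine on each such component and in particular differentiable there. This immediately gives that the set of non-differentiable points $N$ is contained in $\bigcup_{i: \bw_i \neq \bzero} H_i$, a union of at most $n$ hyperplanes.

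To upgrade the inclusion to the precise statement that $N$ itself equals a union of at most $n$ hyperplanes (addressing the subtlety flagged in the footnote about neurons whose activation boundaries coincide and mutually smooth), I would show that each $H_i$ is either entirely contained in $N$ or contributes to $N$ only through its lower-dimensional intersections with the other $H_j$'s. Concretely, at a generic point $\bx_0 \in H_i$ lying on no $H_j$ with $H_j \neq H_i$, I would compute the jump of $\nabla f$ across $H_i$ and observe that, using the relation $\bw_j = \lambda_j \bw_i$, $b_j = \lambda_j b_i$ among neurons sharing $H_i$, this jump equals $\bigl(\sum_{j:H_j = H_i} a_j |\lambda_j|\bigr) \bw_i$, a fixed vector independent of the choice of $\bx_0$. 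Consequently either the jump is zero at every generic $\bx_0 \in H_i$ and $H_i$ is ``smoothed out,'' or it is nonzero at every generic $\bx_0$ and $H_i \subseteq N$ by closure of $N$. Letting $S \subseteq [n]$ index the surviving hyperplanes, I would conclude $N = \bigcup_{i \in S} H_i$.

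The main obstacle I anticipate is the final step, namely verifying that any $\bx_0 \in N$ actually lies on some $H_i$ with $i \in S$, even when $\bx_0$ sits at an intersection of several hyperplanes (some smoothed, some not). The argument should exploit that smoothed hyperplanes produce no gradient jump across their generic points, so the gradient discontinuity at $\bx_0$ between any two adjacent full-dimensional cells must be attributable to crossing surviving hyperplanes, forcing at least one such hyperplane to pass through $\bx_0$.
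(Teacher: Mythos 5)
Your proposal is correct and follows essentially the same route as the paper: grouping first-layer neurons by their common hyperplane via $\bw_j=\lambda_j\bw_i$, $b_j=\lambda_j b_i$, and checking whether the net coefficient $\sum_j a_j|\lambda_j|$ vanishes is exactly the paper's merging ($\alpha>0$) and cancellation ($\alpha<0$, $v_j=\alpha v_k$) case analysis, just phrased as a gradient-jump computation instead of neuron surgery. The final step you flag is resolved the same way in the paper: smoothed groups sum to a globally affine function, so $N$ is contained in the union of surviving hyperplanes, and intersection points of surviving hyperplanes are shown non-differentiable by a directional-derivative (or closedness) argument.
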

    
    The proof technique of the above lemma is based on the approach of \citet[Proposition~2.2]{mukherjee2017lower}, and completes a gap in their original proof.
    
    \begin{proof}
        Let 
        \[
            \Ncal(\bx)=\sum_{i=1}^nv_j\relu{\inner{\bw_i,\bx}+b_i}+b_0
        \]
        be a depth-2, width-$n$ neural network. Given the weights $\bzero\neq\bw\in\reals^d$ and bias $b\in\reals$ of a hidden neuron, we define \[
            H_{\bw,b}\coloneqq \{\bx:\inner{\bw,\bx}+b=0\}
        \]
        as shorthand for the hyperplane induced by $\bw$ and $b$. To prove the lemma, we will manipulate the network, reducing the number of neurons in it while keeping its set of non-differentiable points unchanged. To this end, we have the following case analysis:
        \begin{itemize}
            \item
            If $v_j$ for some $j\in[n]$, then the output neuron ignores the $j^{\text{th}}$ neuron's output, and we can simply remove it from the network while keeping the function computed by $\Ncal$ unchanged.
            \item 
            Similarly, if $\bw_j=\bzero$ for some $j$, then we have
            \[
                v_j\relu{\inner{\bw_j,\bx}+b_j} = v_j\relu{b_j},
            \]
            and therefore the $j^{\text{th}}$ neuron only adds a constant to the computation of $\Ncal$ without changing its set of non-differentiable points, and we can thus remove it.\footnote{Note that while this might change the function computed by $\Ncal$, it does not change the set of non-differentiable points, and therefore removing it suffices for our purposes. However, if one wishes to remove all neurons such that $\bw_j=\bzero$ while keeping the function computed by $\Ncal$ unchanged, then this can be done by removing the $j^{\text{th}}$ neuron and modifying the output neuron's bias term using the transformation $b_0\mapsto b_0+v_j\relu{b_j}$, which would simulate the same computation.}
            \item 
            Suppose that $H_{\bw_{j},b_{j}}=H_{\bw_{k},b_{k}}$. Namely, we have that neuron $j$ and neuron $k$ for $j\neq k$ induce the same hyperplane. Since the solution space of the linear system 
            \[
                \p{
                \begin{matrix}
                    \bw_{j}^{\top}\\ \bw_{k}^{\top}
                \end{matrix}}
                \cdot\bx=-\p{
                \begin{matrix}
                    b_{j}\\ b_{k}
                \end{matrix}}
            \]
            is a rank $d-1$ subspace (the hyperplane induced by the neurons), $\bw_{j},\bw_{k}$ must be linearly dependent. Writing $\bw_{k}=\alpha\bw_{j}$ for some $\alpha\neq0$ and plugging this in the linear system, we obtain $b_{k}=\alpha b_{j}$. We now analyze two complementary cases:
            \begin{itemize}
                \item 
                Suppose that $H_{\bw_{j},b_{j}}=H_{\bw_{k},b_{k}}$ and that $\bw_{k}=\alpha\bw_{j}$ and $b_{k}=\alpha b_{j}$ for some $\alpha>0$. Then for all $v_{j},v_{k}\in\reals$ we have
                \begin{align*}
                    v_{j}\relu{\inner{\bw_{j},\bx} + b_{j}} + v_{k}\relu{\inner{\bw_{k},\bx} + b_{k}} &= v_{j}\relu{\inner{\bw_{j},\bx} + b_{j}} + v_{k}\relu{\inner{\alpha\bw_{j},\bx} + \alpha b_{j}}\\
                    &= v_{j}\relu{\inner{\bw_{j},\bx} + b_{j}} + \alpha v_{k}\relu{\inner{\bw_{j},\bx} + b_{j}}\\
                    &= (v_j + \alpha v_{k})\relu{\inner{\bw_{j},\bx} + b_{j}},
                \end{align*}
                where in the second equality we used the facts that the ReLU activation is positively homogeneous and $\alpha>0$. This shows that in this case, we can replace every pair of neurons with overlapping induced hyperplanes with a single neuron whose weights and bias equal $\bw_j$ and $b_j$, respectively, and modify the incoming weight of the output neuron to $v_j+\alpha v_k$. This would allow $\Ncal$ to compute the same function, thus keeping its set of non-differentiable points unchanged.
                \item 
                Suppose that $H_{\bw_{j},b_{j}}=H_{\bw_{k},b_{k}}$ and that $\bw_{k}=\alpha\bw_{j}$ and $b_{k}=\alpha b_{j}$ for some $\alpha<0$. Then in this case, we can assume that there are at most two neuron with overlapping induced hyperplanes (since otherwise we can cancel them as detailed in the previous item). Suppose that $v_j=\alpha v_k$. Then we have
                \begin{align}
                    v_{j}\relu{\inner{\bw_{j},\bx} + b_{j}} + v_{k}\relu{\inner{\bw_{k},\bx} + b_{k}} &= v_{j}\relu{\inner{\bw_{j},\bx} + b_{j}} + v_{k}\relu{\inner{\alpha\bw_{j},\bx} + \alpha b_{j}}\nonumber\\
                    &= v_{j}\relu{\inner{\bw_{j},\bx} + b_{j}} - \alpha v_{k}\relu{\inner{-\bw_{j},\bx} - b_{j}}\nonumber\\
                    &= v_{j}\p{\relu{\inner{\bw_{j},\bx} + b_{j}} - \relu{-\p{\inner{\bw_{j},\bx} + b_{j}}}}\nonumber\\
                    &= v_{j}\p{\inner{\bw_{j},\bx} + b_{j}},\label{eq:slope_cancels}
                \end{align}
                where in the second equality we used the positive homogeneity of the ReLU with $\alpha<0$, and in the last equality we used the fact that $\relu{z}-\relu{-z}=z$ for all $z\in\reals$. We can therefore remove neurons $j$ and $k$ without changing the non-differentiable set of $\Ncal$.
            \end{itemize}
            \item
            In all possible remaining cases, we have that a hyperplane is induced by a single neuron, or that it is induced by at most two neurons but with a negative $\alpha$. We will now construct the set of non-differentiable points of $\Ncal$ based on this derivation.
            \begin{itemize}
                \item
                In the first case, a set of non-differentiable points is formed along this hyperplane. Taking the union over all such hyperplanes, it is straightforward to verify that this union also forms a set of non-differential points. This is trivial in the case where a point does not intersect any other hyperplane, and when it does, it is easy to verify that it is still a non-differentiable point by considering the derivative along any direction that is not contained in one of the hyperplanes (such a direction must exist since there is a finite intersection of hyperplanes).
                \item 
                In the second case where a hyperplane is induced by two neurons, since their $\alpha$ is negative and since the slope in the direction perpendicular to the hyperplane changes (otherwise this pair of neurons would have merged into a single neuron according to Equation~(\ref{eq:slope_cancels})), we have that by the same reasoning as in the previous item, that the union of these hyperplanes with the hyperplanes in the previous item forms the set of non-differentiable points of $\Ncal$.
            \end{itemize}
        \end{itemize}
        We have constructed a hyperplane arrangement consisting of at most $n$ hyperplanes (since we can only remove but never add neurons and their induced hyperplanes) which forms the precise set of non-differentiable points of $\Ncal$, concluding the proof of the lemma.
    \end{proof}
    The following proposition shows that no depth-2 network can compute the function $\Max_3(\bx)$ on $[0,1]^3$.
    
    \begin{proposition}\label{prop:max3_depth2}
        There exists no depth-2 ReLU network which satisfies
        \[
            \Ncal(\bx)=\Max_3(\bx)
        \]
        for all $\bx\in[0,1]^3$.
    \end{proposition}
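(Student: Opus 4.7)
The plan is to invoke Lemma~\ref{lem:hyperplane_arrangement} to pin down the non-differentiability structure of a hypothetical depth-2 network computing $\Max_3$, and then exhibit an interior point of the unit cube where this structure contradicts the known non-differentiability of $\Max_3$. Assume for contradiction that $\Ncal$ is a depth-2 ReLU network with $\Ncal(\bx) = \Max_3(\bx)$ on $[0,1]^3$. By Lemma~\ref{lem:hyperplane_arrangement}, the non-differentiable set of $\Ncal$ in $\reals^3$ equals a finite union $H_1 \cup \cdots \cup H_n$ of affine hyperplanes, every point of which is non-differentiable. Since $\Ncal$ and $\Max_3$ agree on the open set $(0,1)^3$, their non-differentiability sets coincide inside $(0,1)^3$.

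The non-differentiability set of $\Max_3$ inside $(0,1)^3$ is the union of three wedges
\[
W_{ij} \;:=\; \bigl\{\bx \in (0,1)^3 : x_i = x_j \ge x_k\bigr\},\qquad \{i,j,k\} = \{1,2,3\},\ i<j,
\]
and each $W_{ij}$ is a two-dimensional, relatively open subset (of positive 2D measure) of the plane $P_{ij} := \{x_i = x_j\}$. The first key step is a dimension-counting argument: for each pair $(i,j)$, any hyperplane $H_\ell$ distinct from $P_{ij}$ intersects $P_{ij}$ in at most a line, so the contribution $H_\ell \cap W_{ij}$ has 2D measure zero inside $P_{ij}$. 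Since $W_{ij}$ is covered by the finite union $\bigcup_\ell (H_\ell \cap (0,1)^3)$, at least one $H_\ell$ must coincide with $P_{ij}$. In particular, $P_{12}$ is forced to appear among the $H_\ell$'s.

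With $P_{12}$ identified as one of the network's non-differentiability hyperplanes, \emph{every} point of $P_{12}$ is non-differentiable for $\Ncal$, including points outside the wedge $W_{12}$. Choose $\bx^\star := (0.2, 0.2, 0.5) \in P_{12} \cap (0,1)^3$: on a small enough ball around $\bx^\star$ contained in $(0,1)^3$, we have $x_3 > \max\{x_1, x_2\}$ strictly, so $\Max_3(\bx) = x_3$ identically there. Hence $\Max_3$, and therefore $\Ncal$, is smooth at $\bx^\star$, contradicting $\bx^\star \in P_{12} \subseteq N(\Ncal)$. The main obstacle is the dimension-counting step: the set inclusion $W_{12} \subseteq \bigcup_\ell H_\ell$ must be upgraded to the conclusion that one $H_\ell$ is \emph{exactly} $P_{12}$ rather than a finite collection of lower-dimensional slivers inside it; once this rigidity is combined with the fact (from Lemma~\ref{lem:hyperplane_arrangement}) that each $H_\ell$ contributes non-differentiability along all of its points, the contradiction falls out by evaluating on the ``wrong'' side of the wedge.
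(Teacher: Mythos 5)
Your proof is correct and follows essentially the same approach as the paper: both rest on Lemma~\ref{lem:hyperplane_arrangement} to conclude that the non-differentiable set is a union of \emph{entire} hyperplanes, force one of them to coincide with $\{x_1=x_2\}$, and then derive a contradiction at a point of that hyperplane where $\Max_3$ is smooth. The only differences are cosmetic: you work directly in $\reals^3$ and upgrade the covering of the wedge to an exact hyperplane via a 2D-measure argument, whereas the paper first restricts to a 2D slice (computing $\max\{0,x_1,x_2\}$ on $[-0.5,0.5]^2$) and uses a pigeonhole argument on $n+1$ collinear points, then contradicts at a point where the restricted function is locally constant rather than locally linear.
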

    
    \begin{proof}
        Suppose by contradiction that there exists a depth-2, width-$n$ ReLU network $\Ncal$ that computes $\Max_3$ for all $\bx\in[0,1]^3$. Consider the network $\tilde{\Ncal}$ obtained from $\Ncal$ by transforming all the biases of the hidden layer neurons using the transformation $b_i\mapsto b_i + 0.5\sum_{j=1}^3 w_{i,j}$, where $w_{i,j}$ is the $j^{\text{th}}$ coordinate of the weight of the $i^{\text{th}}$ neuron in the hidden layer. For all $\bx\in[0,1]^3$, let $\bx_{0.5}\coloneqq\bx-(0.5,0.5,0.5)$. Then we have that $\bx_{0.5}\in[-0.5,0.5]^3$ and that any neuron in the first hidden layer now computes for any $\bx_{0.5}$
        \begin{align*}
            \relu{\inner{\bw_i,\bx_{0.5}}+b_i-0.5\sum_{j=1}^d w_{i,j}} &= \relu{\inner{\bw_i,\bx-(0.5,0.5,0.5)}+b_i+0.5\sum_{j=1}^d w_{i,j}}\\ &= \relu{\inner{\bw_i,\bx}+b_i}.
        \end{align*}
        Thus, we have that $\tilde{\Ncal}(\bx_{0.5})=\Max_d(\bx)$ for all $\bx_{0.5}\in[-0.5,0.5]^d$. We further modify $\tilde{\Ncal}$ by subtracting $0.5$ from the bias term of the output neuron, to obtain
        \[
            \tilde{\Ncal}(\bx_{0.5})=\Max_d(\bx)-0.5=\Max_d(\bx_{0.5})
        \]
        for all $\bx_{0.5}\in[-0.5,0.5]^d$.
    
        We now have that $\tilde{\Ncal}(0,x_1,x_2)=\max\{0,x_1,x_2\}$ for all $x_1,x_2\in[-0.5,0.5]$. By setting the incoming weights that receive the first input to zero in all of the hidden neurons, we effectively simulate the same computations that the network performs when the input is $(0,x_1,x_2)$ for $x_1,x_2\in[-0.5,0.5]$. This way, we obtain a network $\Ncal'$ satisfying $\Ncal'(x_1,x_2)=\max\{0,x_1,x_2\}$ for all $x_1,x_2\in[-0.5,0.5]$. Note that $\Ncal'$ has the same width as $\tilde{\Ncal}$, which has the same width as $\Ncal$, and thus $\Ncal'$ has width $n$.
        
        By virtue of \lemref{lem:hyperplane_arrangement}, the set of non-differentiable points of $\Ncal'$ is a hyperplane arrangement with at most $n$ hyperplanes. However, on the domain $[-0.5,0.5]^2$, the set of non-differentiable points is precisely %$L_1\cup L_2\cup L_3$
        \[
            A\coloneqq\{(x_1,x_2):0\le x_1=x_2\le1\} \cup \{(x_1,0):-1\le x_1\le0\} \cup \{(0,x_2):-1\le x_2\le0\}.
        \]
        Suppose by contradiction that there exists a hyperplane arrangement whose intersection with the set $[-0.5,0.5]^2$ equals $A$. Pick any distinct $n+1$ points on the line $\{(x_1,x_2):0\le x_1=x_2\le1\}$. By the pigeonhole principle, there exists a hyperplane which intersect at least two points on this line. This implies that this hyperplane must intersect the entire line, hence it equals the set $\{(x_1,x_2):x_1=x_2\in\reals\}$. In particular, the point $(-0.25,-0.25)$ must be in this hyperplane, hence it is a non-differentiable point of $\Ncal'$. But $\Ncal'$ is constant and equals zero for all $x_1,x_2\in[-0.5,0]$, which is a contradiction.
    \end{proof}

    In the lemma below, used to prove our main ``homogenization" argument, given some $\varepsilon>0$ we let $B_{\varepsilon}(\bzero)\coloneqq\{\bx\in\reals^d:\norm{\bx}<\varepsilon\}$ denote the origin-centered Euclidean ball of radius $\varepsilon$.
    
    \begin{lemma}\label{lem:homogenized}
        Suppose that $\Ncal:\reals^d\to\reals$ is a ReLU network that satisfies $\Ncal(\bx)=\Max_d(\bx)$ for all $\bx\in[0,1]^d$. Then, there exists a ReLU network $\Ncal'$ such that
        \begin{enumerate}
            \item 
            $\Ncal'$ has the same depth and at most twice the width of $\Ncal$.
            \item 
            $\Ncal'$ is homogeneous.
            \item 
            There exists $\varepsilon>0$ such that $\Ncal'(\bx)=\Max_d(\bx)$ for all $\bx\in B_{\varepsilon}(\bzero)$.
            \item 
            All the weights of the neurons in the first hidden layer of $\tilde{\Ncal}(\bx)$ contain at least two coordinates that are non-zero.
        \end{enumerate}
    \end{lemma}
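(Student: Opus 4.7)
My plan is to construct $\Ncal'$ in three stages --- translate inputs so that $\Ncal$ computes $\Max_d$ on a neighborhood of $\bzero$ rather than a corner of the hypercube, recursively push biases forward layer by layer until the network is positively homogeneous, and extend the equality with $\Max_d$ to all of $\reals^d$ by homogeneity --- and then to handle property~4 via a random Gaussian perturbation injected into the first-layer splits. A random choice of translation scalar together with the random perturbation will enforce the two-non-zero-coordinates property almost surely.

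For the first stage, I choose $a \in (0,1)$ uniformly at random and set $\Ncal^{(1)}(\bz) := \Ncal(\bz + (a, \ldots, a)) - a$. Since $\Max_d(\bz + (a, \ldots, a)) = \Max_d(\bz) + a$, the network $\Ncal^{(1)}$ agrees with $\Max_d$ on $[-a, 1-a]^d$, which contains some ball $B_{\varepsilon_0}(\bzero)$. Only the first-layer biases change, via $b_i \mapsto b_i + a \sum_j w_{i,j}$; for almost every $a$, this shifted bias is zero only when both $b_i = 0$ and $\sum_j w_{i,j} = 0$, a fact I reserve for property~4. Next, I process hidden layers $1, \ldots, k-1$ in order. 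Inductively, after processing layer $\ell$ the first $\ell$ hidden layers are bias-free and their outputs vanish at $\bz = \bzero$; shrinking $\varepsilon$ as needed, each neuron $\relu{\inner{\bw, \bh} + b}$ in the layer being processed is handled according to the sign of $b$ at $\bh = \bzero$: if $b < 0$, the neuron is inactive on a neighborhood and removed (Figure~\ref{fig:homogenize_negative_bias}); if $b > 0$, it is affine on a neighborhood and replaced by the pair $\relu{\inner{\bw, \bh}}, \relu{-\inner{\bw, \bh}}$, whose $\pm$ combination reproduces $\inner{\bw, \bh}$, while the constant $vb$ (with $v$ the original outgoing weight) is absorbed into the downstream neuron's bias (Figure~\ref{fig:homogenize_positive_bias}); if $b = 0$, the neuron is already unbiased and is left alone. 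Each neuron is replaced by at most two, so the width at most doubles. After processing all hidden layers, the output bias must equal $\Max_d(\bzero) = 0$, so $\Ncal'$ is positively homogeneous, and for any $\bz \in \reals^d$ picking $c > 0$ with $c\bz \in B_\varepsilon(\bzero)$ gives $\Ncal'(\bz) = c^{-1}\Ncal'(c\bz) = c^{-1}\Max_d(c\bz) = \Max_d(\bz)$. This yields properties~1--3.

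Property~4 is the delicate part. Every first-layer weight vector of $\Ncal'$ is either some $\bw_i$ (from the zero-bias case) or $\pm\bw_i$ (from the positive-bias split), so it has the same zero pattern as $\bw_i$. For almost every $a$, the zero-bias case forces $b_i = \sum_j w_{i,j} = 0$, in which case either $\bw_i = \bzero$ (the neuron is useless and is dropped) or $\bw_i$ has at least two non-zero coordinates, since a single non-zero entry would contradict $\sum_j w_{i,j} = 0$. The only remaining danger is a single-coordinate $\bw_i$ that enters the positive-bias split; for those neurons I modify the split by injecting a random Gaussian perturbation, replacing the pair by $\relu{\inner{\bw_i + \eta \bxi, \bh}}, \relu{-\inner{\bw_i + \eta \bxi, \bh}}$, and introducing a single shared correction pair $\relu{\pm \eta \inner{\bxi, \bh}}$ whose outgoing weights into each downstream neuron are chosen to cancel the accumulated $\eta \inner{\bxi, \bh}$ perturbations exactly. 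With probability one over $a$ and $\bxi$, every first-layer weight vector of $\Ncal'$ then has at least two non-zero coordinates, and the amortized cost of the two correction neurons keeps the final width within a factor of two of the original.

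The heart of the proof is this last step. Stages~1--3 follow the familiar homogenization template used by~\citet{hertrich2021towards}, but the bias-pushing step on its own preserves the zero pattern of every first-layer weight and therefore cannot eliminate pathological single-coordinate directions. The hard part will be to inject genuinely new directions into the first layer, cancel them exactly downstream so as not to disturb $\Ncal'(\bz) = \Max_d(\bz)$ on the neighborhood of $\bzero$, and squeeze the whole construction into the factor-two width budget --- all while the random choices make property~4 hold almost surely.
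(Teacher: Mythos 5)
Your stages 1--3 track the paper's proof of this lemma essentially verbatim: a random translation so that $\Ncal$ computes $\Max_d$ on a neighborhood of $\bzero$, a layer-by-layer bias-pushing step (remove negatively-biased neurons, split positively-biased ones into $\pm\bw$ pairs and push the constant forward), the factor-two width count, and the homogeneity argument forcing the output bias to vanish. Where you genuinely diverge is property~4. The paper argues only that, for almost every shift $c$, any first-layer neuron whose weight vector has a single non-zero coordinate acquires a non-zero bias, and then asserts that the homogenized network has no such neurons; you correctly observe that the positive-bias split preserves the support of $\bw$, so that assertion needs the extra work you supply. Your fix --- perturbing the split direction to $\bw_i+\eta\bxi$ (which a.s.\ has full support), exploiting the exact identity $\relu{z}-\relu{-z}=z$ so the pair still contributes $v\inner{\bw_i+\eta\bxi,\bx}+vb$ linearly, and cancelling the spurious $v\eta\inner{\bxi,\bx}$ terms with a shared correction pair whose weights $\pm\eta\bxi$ are themselves generic --- is sound and buys a genuinely airtight treatment of the one case the paper's argument glosses over.

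The one concrete gap is the width accounting for property~1. If every first-layer neuron ends up in the positive-bias case (so all $n$ of them are split) and at least one is single-coordinate, your construction produces $2n+2$ first-layer neurons, which exceeds the claimed factor of two; the phrase ``amortized cost'' is doing work that nothing in the argument supports. This is fixable --- e.g., when some split pair with a weight vector of support size at least two and non-zero outgoing weight exists, absorb the total correction $-\sum_i v_i\eta\bxi$ into that pair's direction instead of adding new neurons, or simply weaken the lemma to ``width at most $2n+2$,'' which the downstream application in Theorem~\ref{thm:depthk} tolerates --- but as written property~1 is not established.
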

    
    \begin{proof}
        We begin with `shifting' the network $\Ncal$ to compute the maximum on a domain containing an open neighborhood of the origin. Consider the network $\tilde{\Ncal}(\cdot)$ obtained by transforming all the biases of the first hidden layer neurons using the transformation $b_i\mapsto b_i + c\sum_{j=1}^d w_{i,j}$ for some $c\in\Ucal([0.4,0.6])$, where $w_{i,j}$ is the $j^{\text{th}}$ coordinate of the weight of the $i^{\text{th}}$ neuron in the first hidden layer. For all $\bx\in[0,1]^d$, let $\bx_c\coloneqq\bx-(c,\ldots,c)$. Then we have that $\bx_c\in[-c,1-c]^d$ and that any neuron in the first hidden layer now computes for any $\bx_c$
        \[
            \relu{\inner{\bw_i,\bx_c}+b_i-c\sum_{j=1}^d w_{i,j}} = \relu{\inner{\bw_i,\bx-(c,\ldots,c)}+b_i+c\sum_{j=1}^d w_{i,j}} = \relu{\inner{\bw_i,\bx}+b_i}.
        \]
        Thus, we have that $\tilde{\Ncal}(\bx_c)=\Max_d(\bx)$ for all $\bx_c\in[-c,1-c]^d$. We further modify $\tilde{\Ncal}$ by subtracting $c$ from the bias term of the output neuron, to obtain
        \[
            \tilde{\Ncal}(\bx_c)=\Max_d(\bx)-c=\Max_d(\bx_c)
        \]
        for all $\bx_c\in[-c,1-c]^d$. Since $c\in[0.4,0.6]$, we have that
        $[-0.4,0.4]^d\subseteq[-c,1-c]^d$, so $\tilde{\Ncal}$ computes $\Max_d$ on a domain which contains an open neighborhood of the origin.
    
        Next, we gradually `homogenize' $\tilde{\Ncal}$, by moving biases in every hidden layer one layer forward. We begin with constructing a new network $\Ncal_1$ which has no biases in its first hidden layer as follows: 
        \begin{itemize}
            \item
            For any neuron in the first hidden layer with non-zero bias, if the bias is negative, we remove the neuron and set all the incoming weights of neurons which receive its output as input in the second hidden layer to zero. This way we are only left with biases that are either zero or positive, and for a sufficiently small $\varepsilon'>0$, we have that $\Ncal_1(\bx)=\Max_d(\bx)$ for all $\bx\in B_{\varepsilon'}(\bzero)$.
            \item 
            We now further modify $\Ncal_1$ to remove its positive biases in the first hidden layer. For each neuron with a positive bias $b>0$ and weight vector $\bw$, we remove the neuron and replace it with two neurons having weights $\bw$ and $-\bw$, and zero biases. For each neuron in the second hidden layer with an incoming weight $v$ from the neuron which had a positive bias, we set its incoming weights for the two new neurons to $v$ and $-v$, and modify its bias term according to $b'\mapsto b'+vb$, so that it now computes its previous output without the input from the neuron we are removing in the first hidden layer, plus $v\relu{\inner{\bw,\bx}}-v\relu{-\inner{\bw,\bx}}+vb=v(\inner{\bw,\bx}+b)$. Since $b>0$, for sufficiently small $\varepsilon''>0$, this equals $v\relu{\inner{\bw,\bx}+b}$ for all $\bx\in B_{\varepsilon''}(\bzero)$, effectively replacing the neuron in the first hidden layer. 
        \end{itemize}
        Replacing all the neurons in the first hidden layer in this manner and intersecting all the balls centered at the origin which maintain the original output of the network, we obtain $\Ncal_1$ whose first hidden layer has width at most twice of that as $\tilde{\Ncal}$, but with zero biases, and which coalesces with $\tilde{\Ncal}$ on the set $B_{\varepsilon_1}$ for some $\varepsilon_1>0$. Continuing in this manner for all the subsequent $k$ hidden layers, we obtain $\Ncal_k$ whose width is at most twice of that as $\tilde{\Ncal}$, but with zero biases in all hidden layers, and which coalesces with $\tilde{\Ncal}$ on the set $B_{\varepsilon}$ for some $\varepsilon>0$. In particular, we have that $\Ncal_k(\bzero)=\Max_d(\bzero)=0$. However, since $\Ncal_k$ has no non-zero biases in its hidden layers, we also have that $\Ncal_k(\bzero)$ equals the bias term of the output neuron, which therefore must also equal zero, proving that $\Ncal_k$ is homogeneous.
    
        It is only left to show that the neurons in the first hidden layer of $\Ncal'\coloneqq\Ncal_k$ all have at least two coordinates that are non-zero. First, we may assume without loss of generality that no neuron in the first hidden layer of $\Ncal'$ has an all-zero weight vector. This is justified, since once homogenized, this neuron has a bias of zero, and therefore it can simply be removed without changing the function computed by $\Ncal'$. Next, observe that in all stages in the construction of $\Ncal'$, the number of non-zero weights in each neuron remains unchanged. If some neuron in the first hidden layer had a weight vector $\bw$ with just a single non-zero coordinate $w_j$, then its bias term was modified to $b\mapsto b +c\cdot w_j$, which is non-zero with probability $1$. We thus have that for almost all $c\in[0.4,0.6]$, once homogenized, $\tilde{\Ncal}$ has no such neurons. Choosing one such $c$ arbitrarily and substituting its values in $\tilde{\Ncal}$, we obtain the desired $\Ncal'$.
        
    \end{proof}

    \begin{proposition}\label{prop:compact_to_unbounded}
        Suppose that $\Ncal:\reals^d\to\reals$ is a ReLU network that satisfies $\Ncal(\bx)=\Max_d(\bx)$ for all $\bx\in[0,1]^d$. Then, there exists a ReLU network $\tilde{\Ncal}:\reals^d\to\reals$ such that
        \begin{enumerate}
            \item 
            $\tilde{\Ncal}$ has the same depth and at most twice the width of $\Ncal$.
            \item 
            $\tilde{\Ncal}(\bx)=\Max_d(\bx)$ for all $\bx\in\reals^d$.
            \item 
            All the weights of the neurons in the first hidden layer of $\tilde{\Ncal}(\bx)$ contain at least two coordinates that are non-zero.
        \end{enumerate}
    \end{proposition}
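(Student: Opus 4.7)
The plan is to define $\tilde{\Ncal}$ as the network $\Ncal'$ produced by \lemref{lem:homogenized}, and to show that properties (1) and (3) of the proposition are inherited directly from conclusions (1) and (4) of that lemma, while property (2) follows from extending the local agreement on $B_\varepsilon(\bzero)$ to all of $\reals^d$ via positive homogeneity. The only substantive work, therefore, is this extension step.

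First I would apply \lemref{lem:homogenized} to $\Ncal$ to obtain a ReLU network $\Ncal'$ of the same depth, width at most $2n$, that is homogeneous, whose first-layer weights each have at least two non-zero coordinates, and such that $\Ncal'(\bx)=\Max_d(\bx)$ for all $\bx$ in some open ball $B_\varepsilon(\bzero)$. Setting $\tilde{\Ncal}\coloneqq\Ncal'$, conditions (1) and (3) are immediate.

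For condition (2), I would argue as follows. Fix an arbitrary $\bx\in\reals^d$; the case $\bx=\bzero$ is trivial since both $\tilde{\Ncal}$ and $\Max_d$ are homogeneous, hence vanish at the origin. For $\bx\neq\bzero$, choose $t\coloneqq\frac{\varepsilon}{2\norm{\bx}}>0$, so that $\norm{t\bx}=\varepsilon/2<\varepsilon$, and therefore $t\bx\in B_\varepsilon(\bzero)$. Using the local agreement guaranteed by \lemref{lem:homogenized}, positive homogeneity of $\tilde{\Ncal}$ (by conclusion (2) of the lemma), and positive homogeneity of $\Max_d$ (which follows from $\max\{tx_1,\ldots,tx_d\}=t\max\{x_1,\ldots,x_d\}$ for all $t>0$), I would chain
\[
    \tilde{\Ncal}(\bx)=\frac{1}{t}\tilde{\Ncal}(t\bx)=\frac{1}{t}\Max_d(t\bx)=\frac{1}{t}\cdot t\cdot\Max_d(\bx)=\Max_d(\bx).
\]
Since $\bx$ was arbitrary, this establishes condition (2) and completes the proof.

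I do not anticipate a serious obstacle here, as all the heavy lifting was performed inside \lemref{lem:homogenized}; the role of this proposition is purely to amplify the local homogeneous identity into a global one via the elementary scaling argument above. The only subtlety to double-check is that $\Max_d$ is genuinely positively homogeneous (and not merely linearly homogeneous), which holds since multiplication by any $t>0$ preserves the order of the coordinates and thus commutes with taking the maximum.
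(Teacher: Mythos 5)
Your proof is correct and follows essentially the same route as the paper: invoke \lemref{lem:homogenized} and then amplify the local identity on $B_\varepsilon(\bzero)$ to all of $\reals^d$ by positive homogeneity. Your choice of scaling factor $t=\varepsilon/(2\norm{\bx})$ is in fact marginally more careful than the paper's, which rescales to a point of norm exactly $\varepsilon$ on the boundary of the open ball.
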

    
    \begin{proof}
        By virtue of \lemref{lem:homogenized}, we obtain a network $\tilde{\Ncal}$ which satisfies the following properties:
        \begin{enumerate}
            \item 
            $\tilde{\Ncal}$ has the same depth and at most twice the width of $\Ncal$.
            \item\label{item:prop2}
            $\tilde{\Ncal}$ is homogeneous.
            \item\label{item:prop3}
            There exists $\varepsilon>0$ such that $\tilde{\Ncal}(\bx)=\Max_d(\bx)$ for all $\bx\in B_{\varepsilon}(\bzero)$.
            \item 
            All the weights of the neurons in the first hidden layer of $\tilde{\Ncal}(\bx)$ contain at least two coordinates that are non-zero.
        \end{enumerate}
        To conclude the proof, we first have that $\tilde{\Ncal}(\bzero)=\Max_d(\bzero)$ by \itemref{item:prop3}, and for all $\bx\neq\bzero$ we have
        \[
            \tilde{\Ncal}(\bx) = \frac{\norm{\bx}}{\varepsilon}\cdot\tilde{\Ncal}\p{\varepsilon\cdot\frac{\bx}{\norm{\bx}}} = \frac{\norm{\bx}}{\varepsilon}\cdot \Max_d\p{\varepsilon\cdot\frac{\bx}{\norm{\bx}}} = \Max_d(\bx),
        \]
        where the first equality follows from \itemref{item:prop2}, the second equality follows from \itemref{item:prop3}, and the last equality follows from the homogeneity of $\Max_d(\cdot)$.
    \end{proof}
    
    It will be convenient to use the following corollary of Turán's Theorem in our proofs.
    
    \begin{corollary}\label{cor:delta_turan}
        For all $\delta>0$, if $d\ge\sqrt{\frac{2}{\delta}}$ and $G$ is a graph with at least $\p{1-\frac{1-\delta}{r-1}}\frac{d^2}{2}$ edges, then $G$ contains a clique of size $r$.    
    \end{corollary}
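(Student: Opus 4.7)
The plan is to prove the corollary by contradiction, after a case split on $r$. Suppose $G$ is a graph on $d$ vertices with $|E(G)|\ge\p{1-\frac{1-\delta}{r-1}}\frac{d^2}{2}$ edges, and assume toward a contradiction that $G$ contains no clique of size $r$.

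In the main case $r\ge 3$, I would apply Turán's theorem (Theorem~\ref{thm:turan}) directly, which under the no-clique assumption yields $|E(G)|\le\p{1-\frac{1}{r-1}}\frac{d^2}{2}$. Chaining this against the edge-count hypothesis gives $\p{1-\frac{1-\delta}{r-1}}\frac{d^2}{2}\le\p{1-\frac{1}{r-1}}\frac{d^2}{2}$, and rearranging produces $\frac{\delta d^2}{2(r-1)}\le 0$, contradicting $\delta>0$ (for any $d\ge 1$). Observe that this case does not rely on the hypothesis $d\ge\sqrt{2/\delta}$ at all: the assumed edge threshold strictly exceeds the Turán bound by $\frac{\delta d^2}{2(r-1)}>0$, regardless of how small $d$ is.

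The role of the hypothesis $d\ge\sqrt{2/\delta}$ is exclusively to handle the boundary case $r=2$, for which Turán's theorem is silent (it requires $r\ge 3$). Here a clique of size $2$ is simply an edge, and the edge-count hypothesis reduces to $|E(G)|\ge\frac{\delta d^2}{2}$. The hypothesis $d\ge\sqrt{2/\delta}$ then gives $\frac{\delta d^2}{2}\ge 1$, and since $|E(G)|$ is a nonnegative integer, this forces $|E(G)|\ge 1$, so $G$ contains at least one edge, i.e. a clique of size $2$.

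There is no genuine obstacle here; the only subtlety is recognizing that the $d$ hypothesis is needed \emph{only} for the $r=2$ boundary case, and that the substantive content of the statement for $r\ge 3$ is outsourced entirely to Turán's theorem. The factor $(1-\delta)$ in the numerator just quantifies the positive gap above the Turán threshold and allows the corollary to be applied flexibly with a slack parameter $\delta$ in later inductive arguments.
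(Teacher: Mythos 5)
Your proof is correct, and it rests on the same key tool as the paper's: Turán's theorem in contrapositive form. The decomposition differs in a small but clarifying way. The paper applies $d\ge\sqrt{2/\delta}$ uniformly to argue that the assumed edge count exceeds the Turán threshold by at least one, and then invokes Turán. You instead split on $r$: for $r\ge 3$ the excess over the Turán threshold is exactly $\frac{\delta}{r-1}\cdot\frac{d^2}{2}$, which is strictly positive for any $d\ge 1$, and strict excess already yields the contradiction --- so the $d$ hypothesis is not doing substantive work in that case. For $r=2$, which Theorem~\ref{thm:turan} as stated does not cover, you use $d\ge\sqrt{2/\delta}$ to force $\delta d^2/2\ge 1$ and hence at least one edge. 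This case split makes the role of the $d$ hypothesis transparent, and it also sidesteps a small algebraic slip in the paper's displayed computation, which writes the excess as $\delta\frac{d^2}{2}$ rather than the correct $\frac{\delta}{r-1}\cdot\frac{d^2}{2}$ (the overall argument survives either way, since strict positivity suffices, but your route avoids the issue cleanly). For context, the paper only ever invokes the corollary with $r=\floor{2.1\,d^{1-\alpha_k}+1}\ge 3$, so the $r=2$ boundary is never exercised there; handling it explicitly as you do is nonetheless the right call given the corollary is stated without restricting $r$.
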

    
    \begin{proof}
        By the assumption $d\ge\sqrt{\frac{2}{\delta}}$ we have $\frac{d^2}{2}\ge\frac{1}{\delta}$. We thus compute
        \[
            \p{1-\frac{1-\delta}{r-1}}\frac{d^2}{2} = \p{1-\frac{1}{r-1}}\frac{d^2}{2} + \delta\frac{d^2}{2} \ge \p{1-\frac{1}{r-1}}\frac{d^2}{2} + 1,
        \]
        and the corollary follows from Theorem~\ref{thm:turan}.
    \end{proof}
    
    \subsection{Proof of Theorem~\ref{thm:depth3}}\label{app:depth3_proof}
        First, it is straightforward to verify that for all $d\in\{1,2,3,4\}$,
        \[
            \floor{\p{\frac18-\frac{1}{4d}-\frac{1}{2d^2}}d^2} \le 0,
        \]
        and therefore the theorem statement holds vacuously for such values of $d$. Assume from now on that $d\ge5$, and denote $a_d\coloneqq\frac18-\frac{1}{4d}-\frac{1}{2d^2}$.
        
        Suppose that $\Ncal$ is a depth-3 ReLU network that computes $\Max_d$ on $[0,1]^d$ with width at most $a_dd^2$. Then by \propref{prop:compact_to_unbounded}, there exists a depth-3 ReLU network $\tilde{\Ncal}$ of width at most $2a_dd^2$ such that $\Ncal(\bx)=\Max_d(\bx)$ for all $\bx\in\reals^d$. Moreover, we have that all the weight vectors $\bw_i$ of the neurons in the first hidden layer of $\tilde{\Ncal}$ contain at least two non-zero coordinates. Let $G_{\tilde{\Ncal}}$ be the corresponding graph induced by the first hidden layer of $\tilde{\Ncal}$. Then by the properties of $\tilde{\Ncal}$, $G_{\tilde{\Ncal}}$ has at least
        \begin{align*}
            \binom{d}{2} - 2a_dd^2 &= \frac{d^2}{2} - \frac{d}{2} - 2\p{\frac18 - \frac{1}{4d} - \frac{1}{2d^2}}d^2 = \frac{d^2}{2} - \frac{d}{2} -\frac{d^2}{4} + \frac{d}{2} + 1 = \frac{d^2}{4}+1
            %\frac{d^2}{2}-\frac{d}{2} - a_dd^2 = \frac{d^2}{2}\p{1-\frac{1}{d} -2a_d} = \frac{d^2}{2}\p{1-\frac{1}{d} -2a_d - \frac{2}{d^2}} + 1\\
            %&= \frac{d^2}{2}\p{1-\frac{d-2a_d^2-2}{d^2} } + 1 = \frac{d^2}{2}\p{1-\frac{1}{\frac{d^2}{d-2a_d^2-2}} } + 1
        \end{align*}
        edges. By Theorem~\ref{thm:turan}, $G_{\tilde{\Ncal}}$ contains a clique of size $3$. By definition, all the neurons in the first hidden layer of $\tilde{\Ncal}$ have a non-zero coordinate with an index that does not form this clique. This holds true since otherwise, if all the coordinates of a particular neuron's weight vector are zero at the indices that do not form the clique, then since it must have at least two non-zero coordinates, we have that these two coordinates are both at indices that form the clique. But this implies that these two coordinates remove an edge from the clique, which is a contradiction. 
    
        Suppose without loss of generality that the clique is formed by the first three coordinates. Then we have shown that all neurons have at least one weight coordinate with index $4$ or larger. We now define
            \[
                w_{\min}\coloneqq\min_{i,j:w_{i,j}\neq0}|w_{i,j}|,\hskip 0.3cm w_{\max}\coloneqq\max_{i,j}|w_{i,j}|, \hskip 0.3cm W\coloneqq \frac{w_{\max}}{w_{\min}}.
            \]
            That is, $W$ is the maximal ratio between two coordinates of weights in the first hidden layer of $\Ncal$. Consider the function computed by the following assignment of inputs for the coordinates $4,\ldots,d$. For any such coordinate with index $i$, we substitute $x_i=-3\cdot(2W)^{i-3}$ as follows
            \[
                f(x_1,x_2,x_3)\coloneqq\Ncal(x_1,x_2,x_3,x_4=-6W,x_5=-12W^2,\ldots,x_d=-3\cdot2^{d-3}W^{d-3}).
            \]
            Since $\tilde{\Ncal}$ computes $\Max_d(\bx)$ for all $\bx\in\reals^d$ and since the assignment assigns negative values for all coordinates with index $4$ or larger, we have that $f(x_1,x_2,x_3)=\max\{x_1,x_2,x_3\}$ for all $\bx\in[0,1]^3$. Let $\bw=(w_1,\ldots,w_d)$ denote the weights of an arbitrary neuron and let $\ell\ge4$ be the largest index of a coordinate such that $w_\ell\neq0$. Then we have for all $\bx\in[0,1]^3$ that
            \begin{align}
                \abs{\sum_{i=1}^{\ell-1}x_iw_i} &= \abs{\sum_{i=1}^3x_iw_i+\sum_{i=4}^{\ell-1}-3\cdot2^{i-3}W^{i-3}w_i} \le \sum_{i=1}^3\abs{x_iw_i}+3\sum_{i=4}^{\ell-1}2^{i-3}W^{i-3}\abs{w_i} \nonumber\\
                &
                \le 3w_{\max} + 3w_{\max}\sum_{i=4}^{\ell-1}(2W)^{i-3} = 3w_{\max} + 3w_{\max}\sum_{i=1}^{\ell-4}(2W)^{i} \nonumber\\
                &= 3w_{\max}\sum_{i=0}^{\ell-4}(2W)^{i} = 
                3w_{\max}\frac{(2W)^{\ell-3}-1}{2W-1} < 3w_{\max}\frac{(2W)^{\ell-3}}{W} \nonumber\\
                &=3w_{\min}(2W)^{\ell-3} \le 3\abs{w_{\ell}}(2W)^{\ell-3} = |x_{\ell}w_{\ell}|.\label{eq:ell_dominates}
            \end{align}
            In the above, the penultimate inequality follows from $(2W)^{\ell-3}-1<(2W)^{\ell-3}$ and $2W-1\ge W$ which holds since $W\ge1$, the penultimate equality and last inequality follow from the definitions of $w_{\min}$, $w_{\max}$ and $W$, and the final equality follows from $x_{\ell}=-3\cdot(2W)^{i-3}$. We thus have that for all $\bx\in[0,1]^3$, 
            \[
                \sign\p{\sum_{i=1}^dx_iw_i} = \sign\p{\sum_{i=1}^{\ell-1}x_iw_i + x_\ell w_\ell} = \sign\p{x_\ell w_\ell} = -\sign\p{w_\ell},
            \]
            where the first equality follows from the fact that $\ell$ is the largest non-zero coordinate of $\bw$, the second equality follows from Equation~(\ref{eq:ell_dominates}), and the last equality follows from $x_{\ell}<0$.
            We therefore obtain that the sign of the pre-activation output $\sum_{i=1}^dx_iw_i$ of each neuron is fixed for all $\bx\in[0,1]^3$ and determined by $w_{\ell}$. 
            
            To conclude the derivation thus far, we have established that there exists a neural network which computes $\Max_3$ on $\bx\in[0,1]^3$, where no neuron in the first hidden layer changes its linearity over this domain. This implies that we can remove the ReLU non-linearities in the first hidden layer as follows: 
            \begin{itemize}
                \item
                If a neuron's pre-activation output is negative, then it is negative for all $\bx\in[0,1]^3$, and thus this neuron always outputs zero. By changing the incoming weight of all the neurons in the second hidden layer to zero at the coordinate which receives it as input, we can simulate it without the ReLU activation. Note that this preserves the architecture without impacting the number of neurons.
                \item 
                If a neuron's pre-activation output is positive, then it is positive for all $\bx\in[0,1]^3$. In such a case, each neuron merely applies a linear transformation to its input and propagates it forward to the second hidden layer. By composing this linear transformation with the linear transformation computed by the second hidden layer, we are able to absorb it in the weights of the second hidden layer. Note that this also does not impact the architecture of the network, and maintains the same width while removing all non-linearities in the first hidden layer.
            \end{itemize}
            Finally, we conclude that in both cases, we can remove the non-linearities of the first hidden layer while maintaining the same behavior over the domain $[0,1]^3$. Thus, we can collapse the first hidden layer which is now linear, and obtain a depth-2 ReLU network that computes $\Max_3$ over the domain $[0,1]^3$. But since this contradicts \propref{prop:max3_depth2} which establishes that no depth-2 network can compute this function, the theorem then follows.
    
        \subsection{Proof of Theorem~\ref{thm:depthk}}\label{app:main_proof}
    
        We first define $\alpha_k\coloneqq\frac{1}{2^{k-2}-1}$ for all $k\ge3$. We now have
        \begin{align}
            \p{1-\alpha_{k}}\p{1+\alpha_{k-1}} &= \p{1-\frac{1}{2^{k-2}-1}}\p{1+\frac{1}{2^{k-3}-1}} = \frac{2^{k-2}-2}{2^{k-2}-1} \cdot \frac{2^{k-3}}{2^{k-3}-1}\nonumber\\
            &= \frac{2^{k-2}}{2^{k-2}-1} = 1 + \frac{1}{2^{k-2}-1} = 1+\alpha_{k}.\label{eq:alpha_property}
        \end{align}
        Next, we note that our assumption $3\le\log_2(\log_2(d))$ implies that
        \begin{equation}\label{eq:d_lb}
            d\ge2^{2^3}=256,
        \end{equation}
        and $k\le\log_2(\log_2(d))$ implies that
        \[
            d^{\alpha_{k}} \ge d^{\frac{1}{2^{\log_2(\log_2(d))-2}-1}} = d^{\frac{4}{2^{\log_2(\log_2(d))}-4}} = d^{\frac{4}{\log_2(d)-4}} = 2^{\frac{4\log_2(d)}{\log_2(d)-4}}=16\cdot2^{\frac{16}{\log_2(d)-4}}.
        \]
        Since $2^{\frac{16}{\log_2(d)-4}}>1$ for all $d\ge256$, we have by Equation~(\ref{eq:d_lb}) that
        \begin{equation}\label{eq:d_exp_k_lb}
            d^{\alpha_{k}} \ge 16.
        \end{equation}
        
        The proof of the theorem now follows by induction. For the base case $k=3$, it is straightforward to verify that for all $d\ge256$ we have
        \[
            \frac18-\frac{1}{4d}-\frac{1}{2d^2} \ge 0.1.
        \]
        Thus, Equation~(\ref{eq:d_lb}) and Theorem~\ref{thm:depth3} imply that a depth-3 ReLU network that computes $\Max_3$ on $[0,1]^3$ has width at least $0.1d^2$. On the other hand, the theorem statement for the case $k=3$ is that width at least $0.1d^{1+\frac{1}{2^{3-2}-1}}=0.1d^2$ is required, which concludes the proof of the base case.
    
        For the inductive step, suppose by contradiction that $\Ncal_{k}$ is a depth-$k$ ReLU network of width at most $0.1d^{1+\alpha_{k}}$ that satisfies $\Ncal_{k}(\bx)=\Max_d$ for all $\bx\in[0,1]^d$. By \propref{prop:compact_to_unbounded}, there exists a depth-$k$ ReLU network $\tilde{\Ncal}_{k}$ of width at most $0.2d^{1+\alpha_{k}}$ that satisfies $\tilde{\Ncal}_{k}(\bx)=\Max_d(\bx)$ for all $\bx\in\reals^d$. Moreover, all the weights of the neurons in the first hidden layer of $\tilde{\Ncal}_{k}$ contain at least two coordinates that are non-zero.

        Let $G_{\tilde{\Ncal}_{k}}$ be the corresponding graph induced by the first hidden layer of $\tilde{\Ncal}_{k}$, as defined in Step~3 in Subsection~\ref{subsec:step3}. Then by the width upper bound on $\tilde{\Ncal}_{k}$ and the definition of $G_{\tilde{\Ncal}_{k}}$, we have that $G_{\tilde{\Ncal}_{k}}$ has at least
        \[
            \binom{d}{2} - \frac{1}{5}d^{1+\alpha_{k}} = \frac{d^2}{2}-\frac{d}{2} - \frac{1}{5}d^{1+\alpha_{k}} = \frac{d^2}{2} - d\p{\frac{1}{2} + \frac{1}{5}d^{\alpha_{k}}}
        \]
        edges.
        By Equation~(\ref{eq:d_exp_k_lb}), we have that $\frac12\le\frac{1}{32}d^{\alpha_{k}}$, implying that the above displayed equation is lower bounded by
        \[
            \frac{d^2}{2} - d\p{\frac{1}{32} d^{\alpha_{k}} + \frac{1}{5}d^{\alpha_{k}}} = \frac{d^2}{2}\p{1-\frac{37}{80d^{1-\alpha_{k}}}} = \frac{d^2}{2}\p{1-\frac{\frac{37}{38}}{\frac{80}{38}d^{1-\alpha_{k}}}} \ge \frac{d^2}{2}\p{1-\frac{1-\frac{1}{38}}{2.1d^{1-\alpha_{k}}}},
        \]
        where the inequality holds since $\frac{80}{38}\ge2.1$. Let $\delta=\frac{1}{38}$. Then by Equation~(\ref{eq:d_lb}), we have $d\ge256\ge\sqrt{76}=\sqrt{\frac{2}{\delta}}$, and therefore from the above displayed equation and by virtue of \corollaryref{cor:delta_turan}, we deduce that $G_{\tilde{\Ncal}_{k}}$ contains a clique of size
        \begin{equation}\label{eq:r_lb}
            r=\floor{2.1d^{1-\alpha_{k}}+1} \ge 2.1d^{1-\alpha_{k}}.
        \end{equation}
        By definition, all the neurons in the first hidden layer of $\tilde{\Ncal}_{k}$ have a non-zero coordinate with an index that does not form this clique. This holds true since otherwise, if all the coordinates of a particular neuron's weight vector are zero at the indices that do not form the clique, then since it must have at least two non-zero coordinates, we have that these two coordinates are both at indices that form the clique. But this implies that these two coordinates remove an edge from the clique, which is a contradiction.
    
        Suppose without loss of generality that the clique is formed by the first $r$ coordinates. Then we have shown that all neurons have at least one non-zero weight coordinate with index $r+1$ or larger. We now define
            \[
                w_{\min}\coloneqq\min_{i,j:w_{i,j}\neq0}|w_{i,j}|,\hskip 0.3cm w_{\max}\coloneqq\max_{i,j}|w_{i,j}|, \hskip 0.3cm W\coloneqq \frac{w_{\max}}{w_{\min}}.
            \]
            That is, $W$ is the maximal ratio between two coordinates of weights in the first hidden layer of $\Ncal$. Consider the function computed by the following assignment of inputs for the coordinates $r+1,\ldots,d$. For any such coordinate with index $i$, we substitute $x_i=-r\cdot(2W)^{i-r}$ as follows
            \[
                f(x_1,\ldots,x_r)\coloneqq\Ncal(x_1,\ldots,x_r,x_{r+1}=-2rW,x_{r+2}=-4rW^2,\ldots,x_d=-r\cdot2^{d-r}W^{d-r}).
            \]
            Since $\tilde{\Ncal}$ computes $\Max_d(\bx)$ for all $\bx\in\reals^d$ and since the assignment assigns negative values for all coordinates with index $r+1$ or larger, we have that $f(x_1,\ldots,x_r)=\max\{x_1,\ldots,x_r\}$ for all $\bx\in[0,1]^r$. Let $\bw=(w_1,\ldots,w_d)$ denote the weights of an arbitrary neuron and let $\ell\ge r+1$ be the largest index of a coordinate such that $w_\ell\neq0$. Then we have for all $\bx\in[0,1]^3$ that
            \begin{align}
                \abs{\sum_{i=1}^{\ell-1}x_iw_i} &= \abs{\sum_{i=1}^rx_iw_i+\sum_{i=r+1}^{\ell-1}-r\cdot2^{i-r}W^{i-r}w_i} \le \sum_{i=1}^r\abs{x_iw_i}+r\sum_{i=r+1}^{\ell-1}2^{i-r}W^{i-r}\abs{w_i} \nonumber\\
                &
                \le rw_{\max} + rw_{\max}\sum_{i=r+1}^{\ell-1}(2W)^{i-r} = rw_{\max} + rw_{\max}\sum_{i=1}^{\ell-r-1}(2W)^{i} \nonumber\\
                &= rw_{\max}\sum_{i=0}^{\ell-r-1}(2W)^{i} = 
                rw_{\max}\frac{(2W)^{\ell-r}-1}{2W-1} < rw_{\max}\frac{(2W)^{\ell-r}}{W} \nonumber\\
                &=rw_{\min}(2W)^{\ell-r} \le r\abs{w_{\ell}}(2W)^{\ell-r} = |x_{\ell}w_{\ell}|.\label{eq:ell_dominates2}
            \end{align}
            In the above, the penultimate inequality follows from $(2W)^{\ell-r}-1<(2W)^{\ell-r}$ and $2W-1\ge W$ which holds since $W\ge1$, the penultimate equality and last inequality follow from the definitions of $w_{\min}$, $w_{\max}$ and $W$, and the final equality follows from $x_{\ell}=-r\cdot(2W)^{i-r}$. We thus have that for all $\bx\in[0,1]^r$, 
            \[
                \sign\p{\sum_{i=1}^dx_iw_i} = \sign\p{\sum_{i=1}^{\ell-1}x_iw_i + x_\ell w_\ell} = \sign\p{x_\ell w_\ell} = -\sign\p{w_\ell},
            \]
            where the first equality follows from the fact that $\ell$ is the largest non-zero coordinate of $\bw$, the second equality follows from Equation~(\ref{eq:ell_dominates2}), and the last equality follows from $x_{\ell}<0$.
            We therefore obtain that the sign of the pre-activation output $\sum_{i=1}^dx_iw_i$ of each neuron is fixed for all $\bx\in[0,1]^r$ and determined by $w_{\ell}$. 
            
            To conclude the derivation thus far, we have established that there exists a neural network which computes $\Max_r$ on $\bx\in[0,1]^r$, where no neuron in the first hidden layer changes its linearity over this domain. This implies that we can remove the ReLU non-linearities in the first hidden layer as follows: 
            \begin{itemize}
                \item
                If a neuron's pre-activation output is negative, then it is negative for all $\bx\in[0,1]^r$, and thus this neuron always outputs zero. By changing the incoming weight of all the neurons in the second hidden layer to zero at the coordinate which receives it as input, we can simulate it without the ReLU activation. Note that this preserves the architecture without impacting the number of neurons.
                \item 
                If a neuron's pre-activation output is positive, then it is positive for all $\bx\in[0,1]^r$. In such a case, each neuron merely applies a linear transformation to its input and propagates it forward to the second hidden layer. By composing this linear transformation with the linear transformation computed by the second hidden layer, we are able to absorb it in the weights of the second hidden layer. Note that this also does not impact the architecture of the network, and maintains the same width while removing all non-linearities in the first hidden layer.
            \end{itemize}
            Finally, we conclude that in both cases, we can remove the non-linearities of the first hidden layer while maintaining the same behavior over the domain $[0,1]^r$. Thus, we can collapse the first hidden layer which is now linear, and obtain a depth-$(k-1)$ ReLU network $\Ncal_{k-1}$ with width at most $0.2d^{1+\alpha_{k}}$ that computes $\Max_r$ over the domain $[0,1]^r$.
    
            We will now derive a contradiction to the induction hypothesis, which implies that $\Ncal_{k-1}$ must have width at least $0.1r^{1+\alpha_{k-1}}$, meaning that it must hold that $0.2d^{1+\alpha_{k}} \ge 0.1r^{1+\alpha_{k-1}}$. We compute
            \begin{align*}
                0.2d^{1+\alpha_{k}} &\ge 0.1r^{1+\alpha_{k-1}} \ge 0.1\p{2.1d^{1-\alpha_k}}^{1+\alpha_{k-1}} = 0.1\cdot2.1^{1+\alpha_{k-1}}d^{1+\alpha_k} \ge 0.1\cdot2.1d^{1+\alpha_k}\\
                &= 0.2d^{1+\alpha_k} + 0.01d^{1+\alpha_k} \ge 0.2d^{1+\alpha_k} + 0.01d \ge 0.2d^{1+\alpha_k} + 2.  
            \end{align*}
            In the above, the second inequality follows from Equation~(\ref{eq:r_lb}), the first equality follows from Equation~(\ref{eq:alpha_property}), the third and fourth inequalities follow from $\alpha_k\ge0$ for all $k$, and the last inequality follows from Equation~(\ref{eq:d_lb}). We have thus reached a contradiction, concluding the proof of the theorem.
    
\end{document}